\RequirePackage{fix-cm}
\documentclass{externalized}
\usepackage{microtype}
\usepackage{graphicx}
\usepackage{hyperref}
\usepackage{url}
\usepackage{booktabs}
\usepackage{amsfonts}
\usepackage{amsmath}
\usepackage{amssymb}
\usepackage{amsthm}
\usepackage{mathtools}
\usepackage{enumitem}
\usepackage{multirow}
\usepackage{subcaption}
\usepackage{xcolor}
\usepackage{listings}
\usepackage{natbib}
\usepackage{multicol}
\usepackage{pifont}
\usepackage{xspace}
\usepackage{wrapfig}
\usepackage{float}
\usepackage{tabularx}
\usepackage{ragged2e}
\usepackage{longtable}
\usepackage{colortbl}

\DeclareTextFontCommand{\tabletextbf}{\fontseries{bx}\selectfont}
\AtBeginEnvironment{tabular}{\let\textbf\tabletextbf}
\AtBeginEnvironment{tabular*}{\let\textbf\tabletextbf}
\AtBeginEnvironment{tabularx}{\let\textbf\tabletextbf}
\AtBeginEnvironment{longtable}{\let\textbf\tabletextbf}

\providecommand{\noopsort}[1]{}
\AtBeginDocument{%
  \providecommand\BibTeX{{%
    \normalfont B\kern-0.5em{\scshape i\kern-0.25em b}\kern-0.8em\TeX}}}

\makeatletter
\DeclareRobustCommand\onedot{\futurelet\@let@token\@onedot}
\def\@onedot{\ifx\@let@token.\else.\null\fi}

\makeatother

\definecolor{codegreen}{rgb}{0,0.6,0}
\definecolor{codegray}{rgb}{0.5,0.5,0.5}
\definecolor{codepurple}{rgb}{0.58,0,0.82}
\definecolor{backcolour}{rgb}{0.95,0.95,0.92}

\lstdefinestyle{mystyle}{
  backgroundcolor=\color{backcolour},
  commentstyle=\color{codegreen},
  keywordstyle=\color{magenta},
  numberstyle=\tiny\color{codegray},
  stringstyle=\color{codepurple},
  basicstyle=\ttfamily\footnotesize,
  breakatwhitespace=false,
  breaklines=true,
  captionpos=b,
  keepspaces=true,
  numbers=left,
  numbersep=5pt,
  showspaces=false,
  showstringspaces=false,
  showtabs=false,
  tabsize=2
}

\usepackage{needspace}
\usepackage{fontawesome5}
\usepackage{xurl}
\usepackage{adjustbox}
\usepackage{array}
\usepackage{fvextra}
\usepackage{capt-of}
\usepackage{wrapfig}
\usepackage{needspace}

\definecolor{sectionblue}{HTML}{315F8C}
\definecolor{abstractlink}{HTML}{315F8C}
\definecolor{scopepromptblue}{HTML}{506C8A}
\definecolor{scopepromptteal}{HTML}{317F79}
\definecolor{scopepromptpurple}{HTML}{796590}
\definecolor{scopebestcol}{HTML}{E6DAEF}
\definecolor{scopesecondcol}{HTML}{E2ECF1}
\definecolor{scopeband}{HTML}{F2F2F2}
\definecolor{scopeablationfull}{HTML}{EAEAF8}
\definecolor{scopeoursrow}{HTML}{F7F4FA}
\definecolor{scopecitecol}{HTML}{2276BA}
\newcommand{\scopebest}[1]{\cellcolor{scopebestcol}\textbf{#1}}
\newcommand{\scopesecond}[1]{\cellcolor{scopesecondcol}\underline{#1}}
\newcommand{\scopetablecite}[1]{\textcolor{scopecitecol}{(\citeauthor{#1},~\citeyear{#1})}}
\newtheorem{scopeproposition}{Proposition}[section]
\newtheorem{scopecorollary}[scopeproposition]{Corollary}
\newtcolorbox{scopepromptbox}[2]{breakable,title={#1},colback=#2!3!white,
  colframe=#2!75!white,colbacktitle=#2!12!white,coltitle=black!85,
  fonttitle=\small\bfseries\sffamily,boxrule=0.45pt,arc=1.2mm,
  boxsep=0.6mm,left=1.5mm,right=1.5mm,top=1.1mm,bottom=1.1mm,
  before skip=7pt,after skip=7pt,fontupper=\footnotesize,parbox=false}

\usepackage[most]{tcolorbox}
\definecolor{scopeblue}{HTML}{397CB5}

\newtcolorbox{scopequestion}{
    enhanced,
    colback=scopeblue!5,
    frame hidden,
    borderline west={2.5pt}{0pt}{scopeblue},
    sharp corners,
    boxsep=0pt,
    left=9pt,
    right=8pt,
    top=7pt,
    bottom=7pt,
    before skip=7pt,
    after skip=9pt
}

\definecolor{scopeS}{HTML}{668DD0}
\definecolor{scopeC}{HTML}{8D78C8}
\definecolor{scopeO}{HTML}{C483BA}
\definecolor{scopeP}{HTML}{C8956F}
\definecolor{scopeE}{HTML}{78A895}

\newcommand{\scopewordmark}{%
  \textcolor{scopeS}{S}%
  \textcolor{scopeC}{C}%
  \textcolor{scopeO}{O}%
  \textcolor{scopeP}{P}%
  \textcolor{scopeE}{E}%
}
\renewcommand{\institutionlogosleft}{}
\title{%
    \texorpdfstring{%
        \raisebox{-0.20\height}{%
            \includegraphics[height=1.25em]{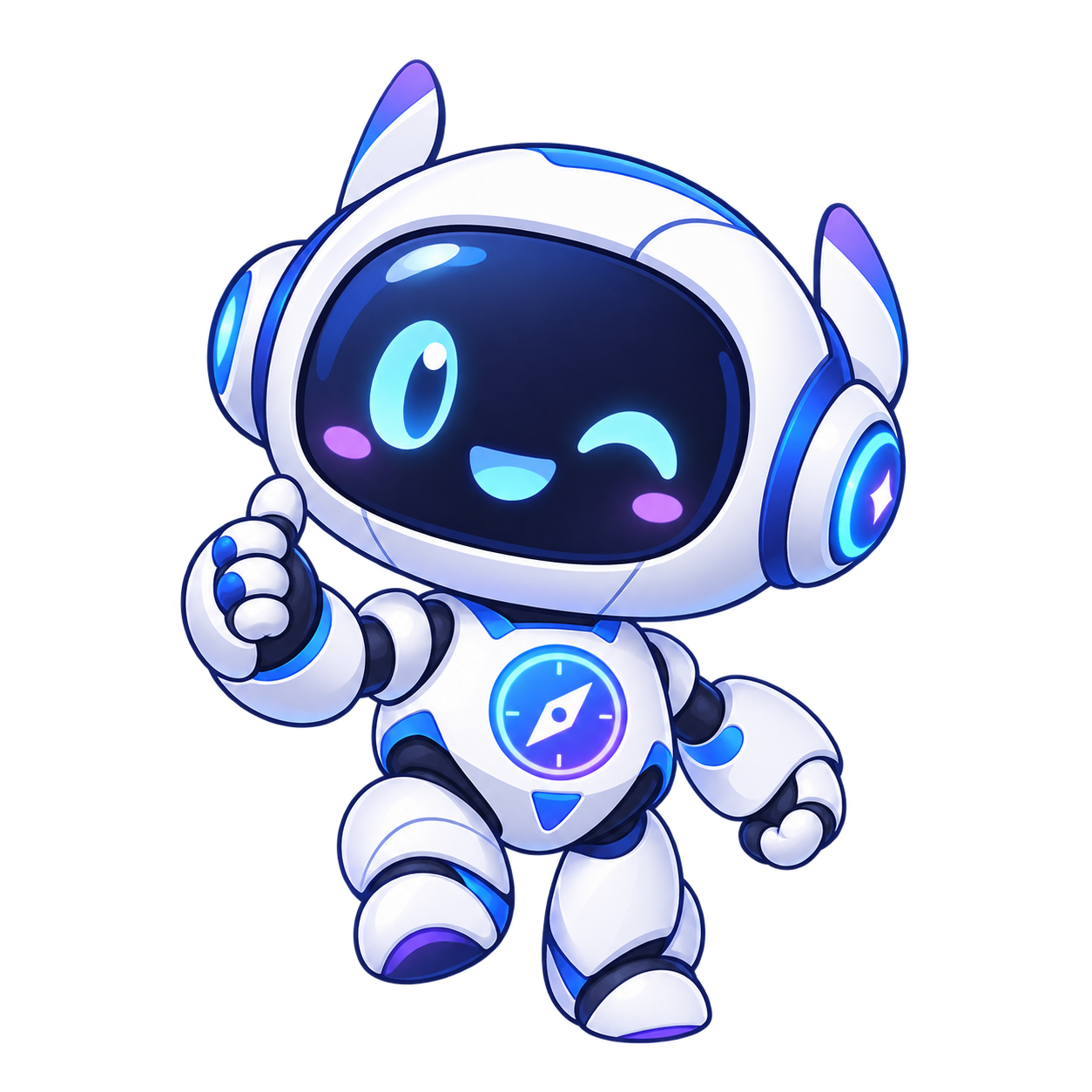}%
        }%
        \hspace{0.18em}%
    }{}%
    \scopewordmark: Selective Critical-Step On-Policy Self-Distillation for Agentic Reinforcement Learning%
    \texorpdfstring{\vspace{-3mm}}{}%
}
\author[1,*]{Qirui Liu}
\author[1,*]{Wei Cao}
\author[1]{Yu Mi}
\author[1]{Yichen Sun}
\author[1]{Xuekun Zhang}
\author[2]{Lei Liu}
\author[2]{Yue Shen}
\author[1,\dagger]{Zhixuan Chu}

\affiliation[1]{Zhejiang University}
\affiliation[2]{Ant Group}

\contribution[*]{Equal contribution}
\contribution[\dagger]{Corresponding author}
\contribution[]{\email{zhixuanchu@zju.edu.cn}}

\abstract{%
Outcome-based reinforcement learning (RL) improves long-horizon
LLM agents through task-level feedback, but provides limited
supervision over intermediate decisions. Our analysis reveals
an \emph{outcome--step mismatch}: successful trajectories
can contain erroneous steps, while failed trajectories may
include correct decisions. To provide finer-grained guidance,
recent methods incorporate hindsight skills into on-policy
self-distillation. However, we further find that applying
trajectory-level skills across all steps can introduce
off-target distillation signals at decisions unrelated to
the intended corrections. Motivated by these observations,
we introduce \textbf{SCOPE}, a framework for
\emph{Selective Critical-Step On-Policy Self-Distillation}.
SCOPE first equips the policy with error-localization and
skill-generation capabilities through lightweight supervised
initialization. During online RL, the policy analyzes its
own trajectories, generates step-specific corrective skills,
and distills their guidance exclusively at selected error
steps. A bounded signed distillation coefficient stabilizes
auxiliary updates while preserving both reinforcement and
suppression, and a success-relative pairwise length reward
encourages efficient task completion. Experiments across three smaller backbones on ALFWorld and WebShop demonstrate consistent improvements. With Qwen2.5-3B, SCOPE achieves 97.7\% and
85.2\% success, outperforming SEED by 5.9 and 6.3 percentage
points, respectively. 
On Qwen3-32B, SCOPE achieves 99.2\% ALFWorld
success versus 85.2\% for GRPO, while reducing
average interactions by 65.0\%.
\par\vspace{1.8mm}
\noindent{\color{abstractlink}\faGlobe}\hspace{0.5em}
\textbf{Project Page:}\enspace
\href{https://neuqrui.github.io/SCOPE-page}{\texttt{https://neuqrui.github.io/SCOPE-page}}
}

\begin{document}
\raggedbottom
\vspace*{-10mm}
\maketitle

\setlength{\parskip}{4pt plus 1pt minus 1pt}
\begin{figure}[H]
    \centering
    \includegraphics[width=\textwidth]{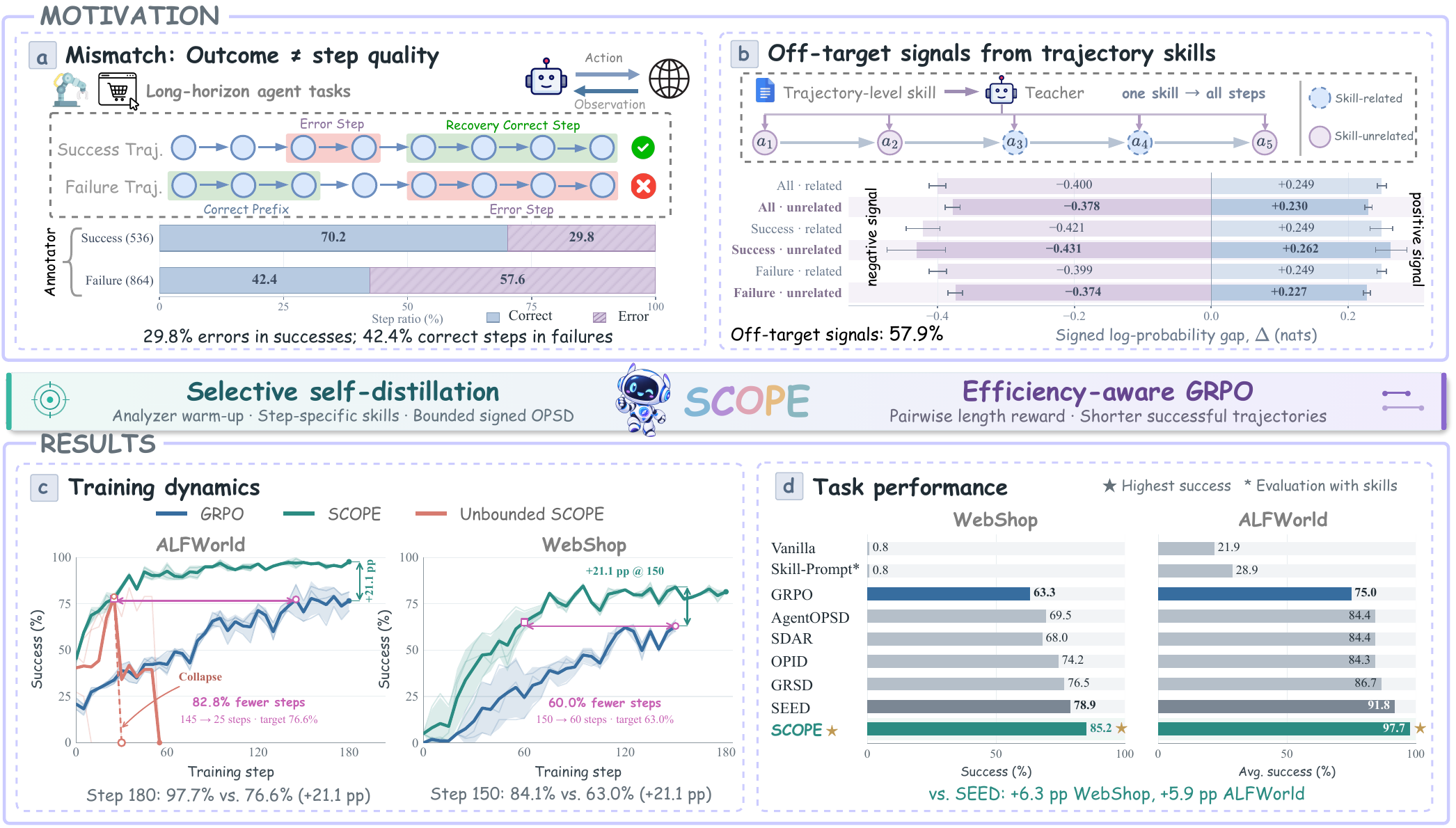}
    % \vspace{-1em}
    \caption{\textbf{Motivation and results of SCOPE.}
(a) Outcome--step mismatch.
(b) Off-target skill signals.
(c) Training dynamics (mean $\pm$ std. across seeds);
Unbounded SCOPE uses raw-gap OPSD.
(d) Qwen2.5-3B results at step 150.}
    \label{fig:intro_overview}
\end{figure}

\section{Introduction}
\label{sec:introduction}

LLM agents solve tasks through sequences of tool calls and environment
interactions~\citep{yao2022react,schick2023toolformer,liu2024agentbench}.
Reinforcement learning (RL) improves these decisions using task
feedback~\citep{qian2026toolrl,wang2025ragen,jin2025search}.
In particular, GRPO~\citep{shao2024deepseekmath} converts the outcomes
of sampled trajectories into relative advantages, providing a scalable
training signal without step-level annotations~\citep{luo2025agent,wu2026spark,wang2025ragen}.
However, an outcome reward reveals whether a task was completed,
with little indication of which intermediate decisions were sound
or required correction.

This distinction matters because \textbf{trajectory success and step
quality do not always agree}. A successful agent may recover from
invalid or mistaken actions, while a failed agent may make useful
progress before a consequential error. We examine this
\emph{outcome--step mismatch} in annotated WebShop trajectories.
Figure~\ref{fig:intro_overview}(a) shows 29.8\% erroneous steps in the
successful group and 42.4\% correct steps in the failed group.
These observations motivate identifying errors independently of the
final outcome. When an outcome-level advantage is shared across a
trajectory, it cannot distinguish these local differences in decision
quality~\citep{feng2026group,cheng2026beyond,liu2026deshortcut}.

To compensate for coarse outcome feedback, recent methods introduce
hindsight-based supervision to provide additional guidance on agent
decisions. On-policy self-distillation
(OPSD)~\citep{zhao2026self} enables such guidance: a teacher branch
receives privileged context and scores actions sampled by the ordinary
policy~\citep{agarwal2024policy,liu2026eopsa}.
Trajectory-derived skills summarize useful
hindsight~\citep{andrychowicz2017hindsight}, and methods such
as SEED train the policy to generate this guidance from its own
experience~\citep{wang2026skill,wu2026seed}.
However, \textbf{privileged skill context can induce off-target
probability shifts at steps unrelated to the skill}. A trajectory-level
skill may address a particular mistake or subgoal, yet conditioning
the teacher on it throughout the trajectory can also change token
probabilities at other steps.
EOPSA~\citep{liu2026eopsa} highlights a related issue:
privileged prompts can introduce task-irrelevant discrepancies
that dilute useful distillation signals.
In our agentic setting,
Figure~\ref{fig:intro_overview}(b) shows that skill-unrelated steps
account for 57.9\% of nonzero sampled-token gaps. Among these
off-target signals, 53.9\% are negative and 46.1\% are positive.
Distilling across all steps assigns auxiliary updates to these
probability shifts even where the skill offers no correction for
the current decision.

Together, these observations raise a central question:

\begin{scopequestion}
\textbf{\textcolor{scopeblue}{Key Question.}}
How can an agent identify critical errors independently
of trajectory outcomes and apply corrective guidance
only at relevant steps?
\end{scopequestion}

\noindent
We introduce \textbf{SCOPE}, a framework for
\emph{Selective Critical-Step On-Policy Self-Distillation}.
A lightweight supervised warm-up teaches the policy to locate
errors and generate corrective skills. During online RL, the policy
analyzes its own trajectories to identify critical errors
in both successful and failed episodes and generates
corresponding step-specific skills. A skill-conditioned
self-teacher then provides corrective supervision
exclusively at the selected steps, aligning distillation
with the decisions that require correction.

During training, however, we observe occasional abrupt
performance collapses with unbounded OPSD updates,
as illustrated in Figure~\ref{fig:intro_overview}(c).
Large teacher--student log-probability gaps can amplify
auxiliary updates and destabilize optimization.
To mitigate this instability, SCOPE employs a
\emph{bounded signed} distillation coefficient that
limits extreme updates while preserving both
reinforcement and suppression.
Moreover, successful trajectories may still contain
redundant interactions, motivating a success-relative
pairwise length reward to encourage efficient task
completion. Together, these designs enable stable,
targeted, and efficiency-aware agentic RL.

Experiments on ALFWorld and WebShop across three
backbones demonstrate consistent improvements.
With Qwen2.5-3B, SCOPE achieves 97.7\% and 85.2\%
success, outperforming SEED by 5.9 and 6.3
percentage points, respectively. On unseen ALFWorld
tasks, SCOPE achieves 89.6\% success (+23.2 points)
with 46.7\% fewer interactions. Extending to
Qwen3-32B, SCOPE reaches 99.2\% ALFWorld success
(+14.0 points over GRPO) with 65.0\% fewer
interactions.

In summary, our contributions are fourfold:
\begin{itemize}[
    leftmargin=1.2em,
    labelsep=0.45em,
    itemsep=4pt,
    topsep=4pt,
    parsep=0pt
]
    \item We reveal two limitations of trajectory-level
    supervision in agentic RL: outcome--step mismatch,
    where trajectory success does not reliably reflect
    individual decision quality, and off-target
    distillation signals induced by privileged skills
    at unrelated steps.

    \item We propose \textbf{SCOPE}, a selective on-policy
    self-distillation framework that bootstraps
    critical-step analysis from supervised warm-up,
    generates step-specific corrective skills from
    online trajectories, and distills their guidance
    exclusively at identified error steps.

    \item We introduce a bounded signed distillation
    coefficient to mitigate unstable auxiliary updates
    while preserving both positive and negative
    corrections, together with a success-relative
    pairwise length reward to encourage efficient
    task completion.

    \item We validate SCOPE on ALFWorld and WebShop
    across three backbones and further extend it to
    Qwen3-32B. On unseen ALFWorld tasks, SCOPE
    improves success by 23.2 points with 46.7\%
    fewer interactions. With Qwen3-32B, it achieves
    99.2\% seen and 94.0\% unseen ALFWorld success.

\end{itemize}

\section{Related Work}
\label{sec:related_work}

\subsection{Reinforcement Learning for LLM Agents}

Reinforcement learning trains LLM agents through feedback from tool
use and environment interaction~\citep{wu2026atlas,luo2025large,yan2026odysseyarena}.
ToolRL~\citep{qian2026toolrl},
RAGEN~\citep{wang2025ragen}, and Search-R1~\citep{jin2025search}
apply this approach to tool selection, interactive planning, and
information seeking. GRPO~\citep{shao2024deepseekmath} provides a
common optimization backbone by comparing trajectory outcomes within
a sampled group. However, trajectory-level advantages offer limited
resolution over intermediate decisions in long interactions.
GiGPO~\citep{feng2026group} estimates step-level advantages through
hierarchical grouping, while GraphGPO~\citep{cheng2026beyond}
leverages state-transition graphs to refine credit assignment.
Rather than directly re-estimating step-level RL advantages,
SCOPE complements outcome-based RL with hindsight error
localization and targeted corrective distillation.

\subsection{Hindsight Skills for LLM Agents}

Experience can guide agents through verbal reflection, reusable
skills, or explicit memory, as demonstrated by
Reflexion~\citep{shinn2023reflexion},
Voyager~\citep{wang2023voyager}, and ExpeL~\citep{zhao2024expel}.
Recent methods further internalize such guidance through policy
training~\citep{madaan2023self,yang2026opid,wang2026skill,wang2026agentopsd}.
SDAR~\citep{lu2026self} retrieves skills from an external skill bank
to provide privileged teacher context, while
Skill-SD~\citep{wang2026skill} derives teacher-only skills
from completed agent trajectories.
OPID~\citep{yang2026opid} constructs hierarchical episode- and
step-level skills through critical-first routing, falling back
to episode-level guidance at non-critical steps.

SEED~\citep{wu2026seed} introduces a policy-synchronized
hindsight analyzer. After a supervised warm-up, the policy
acts as both actor and analyzer, generating trajectory-level
skills that evolve with online training.
However, conditioning distillation on such skills across
all steps can introduce off-target signals at unrelated
decisions.
GRSD~\citep{zheng2026group} contrasts policy-generated
reflections from successful and failed trajectories to
construct group-level guidance for turn-level credit assignment.
In contrast, SCOPE internalizes critical-step analysis,
generates step-specific corrective skills, and restricts
their supervision to identified errors in both successful
and failed trajectories.

\subsection{On-Policy Self-Distillation}

Knowledge distillation transfers teacher guidance to a student
model~\citep{hinton2015distilling,kim2016sequence}.
On-policy distillation instead evaluates student-generated
sequences, reducing the distribution mismatch associated with
offline demonstrations~\citep{ross2011reduction,agarwal2024policy}.
On-policy self-distillation (OPSD) constructs a teacher from
the same model under privileged context~\citep{zhao2026self,liu2026eopsa}.
RLSD~\citep{yang2026self} uses teacher--student discrepancies
to modulate token-level update magnitudes while retaining
reward-determined optimization directions.
SDAR~\citep{lu2026self} employs a sigmoid gate that
emphasizes positive teacher endorsements while attenuating
negative rejections, improving auxiliary training stability.
AgentOPSD~\citep{wang2026agentopsd} aggregates token-level
discrepancies and recursively updates belief states for
turn-level credit assignment.
SOD~\citep{zhong2026sod} adaptively weights step-level
distillation according to teacher--student divergence.

However, teacher--student discrepancies do not necessarily
reflect task-relevant corrections.
EOPSA~\citep{liu2026eopsa} highlights how privileged prompting
can introduce task-irrelevant stylistic shifts that dilute
useful distillation signals.
In agentic tasks, skill conditioning may similarly affect
decisions unrelated to the intended correction.
Unlike divergence-based weighting, SCOPE identifies
critical errors through self-generated hindsight analysis
and restricts auxiliary distillation to these steps.
A bounded signed coefficient further limits extreme
teacher--student updates while preserving both
positive and negative corrections.

% METHOD-BEGIN
\section{Method}
\label{sec:method}

\begin{figure}[t]
    \centering
    \includegraphics[width=\columnwidth]{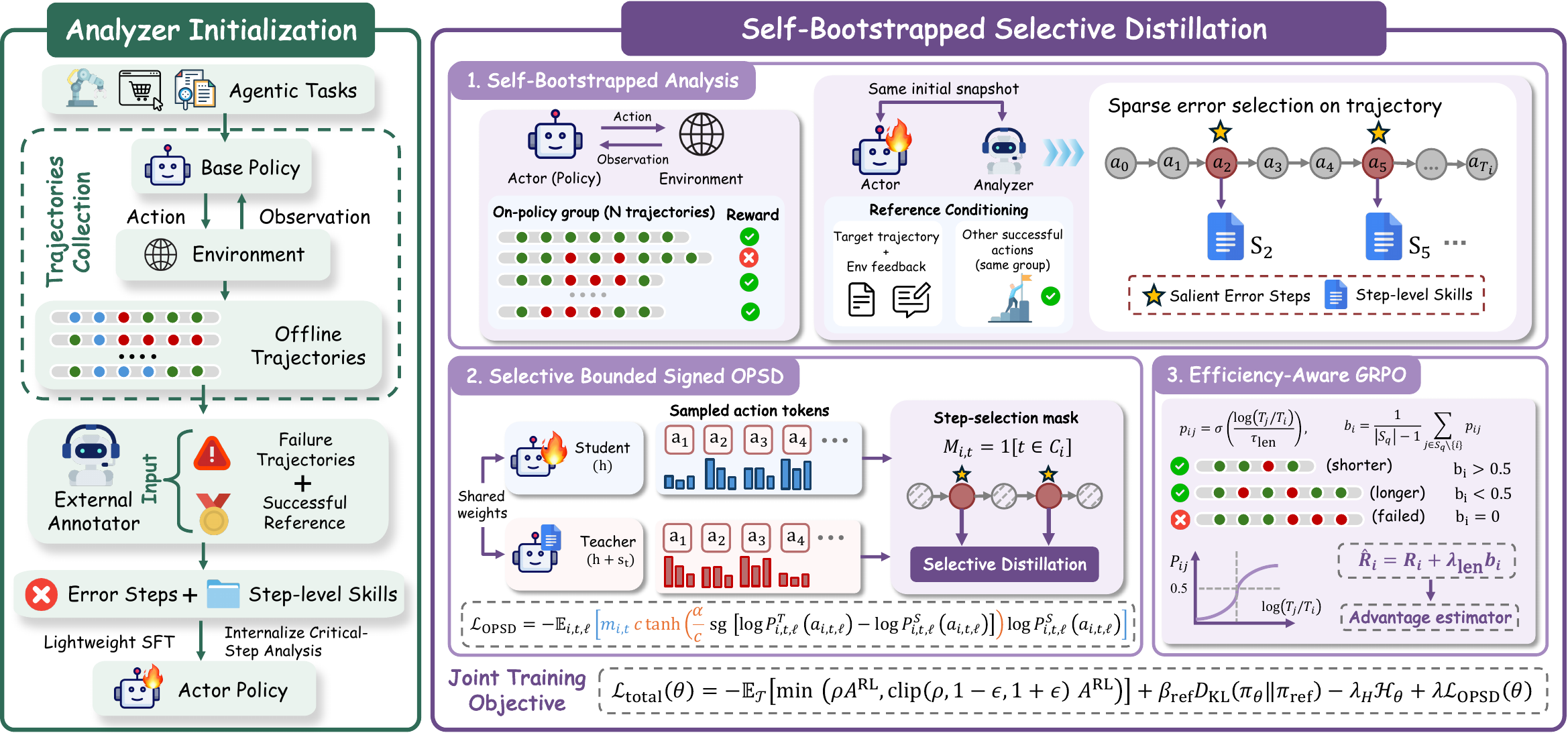}
    % \vspace{-2em}
    \caption{Overview of SCOPE. Supervised initialization teaches the
    policy to analyze trajectories and generate corrective skills.
    During online training, the policy collects and analyzes its own
    rollouts, assigns skills to selected error steps, and combines
    selective self-distillation with efficiency-aware GRPO.}
    \label{fig:method_overview}
\end{figure}

SCOPE supplements outcome-based RL with corrective supervision at
selected error steps (Figure~\ref{fig:method_overview}).
A supervised warm-up teaches the policy to locate
mistakes and generate corrective skills. During online training, the
same policy analyzes its rollouts, and a skill-conditioned self-teacher
scores the sampled actions at the selected steps. Bounded signed
self-distillation transfers these corrections to the student, while a
success-relative length reward encourages efficient task completion.

\subsection{Agent Interaction and Policy Snapshots}
\label{sec:problem_formulation}

For a task $q$, the agent receives observation $o_t$ and generates
a response $a_t=(y_{t,1},\ldots,y_{t,L_t})$ from its available interaction
history $h_t$:
\begin{equation}
    h_t=(q,o_1,a_1,\ldots,o_t),\qquad
    a_t\sim\pi_\theta(\cdot\mid h_t).
\end{equation}
The response contains the model-generated reasoning and action tokens
for one environment interaction. The environment executes the action
and returns $o_{t+1}$. A completed trajectory
$\tau_i=\{(h_{i,t},a_{i,t},o_{i,t+1})\}_{t=1}^{T_i}$ contains $T_i$
interaction steps and receives a task reward $R_i$. We denote its
binary success indicator by $z_i\in\{0,1\}$, where $z_i=1$ indicates
successful task completion. Interaction length
$T_i$ and response-token length $L_{i,t}$ are distinct quantities.

At outer iteration $u$, we take a policy snapshot
$\bar\theta_u=\theta_u$ and sample $N$ trajectories for each task:
\begin{equation}
    \mathcal G_q^{(u)}=\{\tau_i\}_{i=1}^{N},\qquad
    \tau_i\sim\pi_{\bar\theta_u}(\cdot\mid q).
\end{equation}
This snapshot supplies the actor, the hindsight analyzer, and the
privileged teacher used to construct the update data. The trainable
student starts from this checkpoint and updates its parameters
$\theta$ across successive minibatches, while $\bar\theta_u$
remains fixed for the current outer iteration.

\subsection{Self-Bootstrapped Error Analysis}
\label{sec:critical_step_analysis}

\paragraph{Supervised initialization}
We first train the policy to analyze trajectories through supervised
fine-tuning, following SEED~\citep{wu2026seed}. GLM-5.2~\citep{zai2026glm52} labels errors
in a small pool of base-policy trajectories and writes a short
corrective skill for each error or group of related errors.
A skill entry is represented as
$(\mathcal D_{i,k},s_{i,k},e_{i,k})$, where $\mathcal D_{i,k}$ is a
nonempty set of target-step indices, $s_{i,k}$ is the corrective skill,
and $e_{i,k}$ contains the error explanation and supporting evidence.
A singleton target set describes one mistake; several steps may share
a skill when they require the same correction.

Let $x_i$ be the serialized analysis input and $Y_i$ its structured
annotation. The warm-up minimizes the autoregressive objective
\begin{equation}
\label{eq:analyzer_sft}
    \mathcal L_{\mathrm{SFT}}(\theta)
    =-\mathbb E_{(x_i,Y_i)\sim\mathcal D_{\mathrm{SFT}}}
    \left[\sum_{r=1}^{|Y_i|}
    \log\pi_\theta(Y_{i,r}\mid x_i,Y_{i,<r})\right],
\end{equation}
where $Y_{i,r}$ is an annotation token. The resulting checkpoint
initializes online training. External annotation is confined to this
offline stage. Appendix~\ref{app:experimental_details} describes
data preparation, and Appendix~\ref{app:analysis_prompts} gives the
SFT and analyzer prompt templates.

\paragraph{Online hindsight analysis}
The analyzer inspects successful and failed trajectories separately.
For a successful trajectory, it uses the trajectory's own observations,
responses, and action-validity feedback to identify avoidable mistakes.
For a failed trajectory, it additionally receives the action sequences
of successful trajectories from the same on-policy task group. With
$\operatorname{Act}(\tau)$ denoting an extracted action sequence, the
reference context is
\begin{equation}
\label{eq:analysis_references}
\mathcal R_i=
\begin{cases}
\{\operatorname{Act}(\tau_j):\tau_j\in\mathcal G_q^{(u)},\ z_j=1\},
    & z_i=0,\\
\varnothing, & z_i=1.
\end{cases}
\end{equation}
Successful references demonstrate feasible task progress. The analyzer
judges each decision using the information available at that step and
the resulting environmental feedback, allowing valid alternative
strategies and reasonable exploration.

The policy snapshot generates the analysis under an analysis-specific
prompt:
\begin{equation}
\label{eq:online_analysis}
\begin{aligned}
    \widehat Y_i&\sim
    \mathcal A_{\bar\theta_u}(\cdot\mid\tau_i,\mathcal R_i),\\
    \widehat Y_i&=
    \{(\mathcal D_{i,k},\widehat s_{i,k},\widehat e_{i,k})\}_{k=1}^{J_i},
    \qquad J_i\leq K_{\max}.
\end{aligned}
\end{equation}
The limit $K_{\max}$ bounds the number of skill entries,
not the total number of selected steps. Parsing removes nonexistent
target indices and duplicate assignments, so the retained target sets
are disjoint. The selected-step set is
\begin{equation}
\label{eq:critical_step_set}
    \mathcal C_i=\bigcup_{k=1}^{J_i}\mathcal D_{i,k},\qquad
    m_{i,t}=\mathbb I[t\in\mathcal C_i].
\end{equation}
Each selected step receives the skill of its assigned entry, denoted
$\widehat s_{i,t}$. The analyzer may return an empty set when no clear
error is supported, including when a failed group supplies no
successful reference. An unselected step receives no auxiliary
distillation signal.

At the next outer iteration, the updated policy supplies both the
actor and analyzer.

\subsection{Bounded Signed Self-Distillation}
\label{sec:selective_opsd}

\paragraph{Step-specific teacher guidance}
We use skill-conditioned on-policy self-distillation to transfer
hindsight guidance into the ordinary policy~\citep{wang2026skill,yang2026opid}.
For each selected step, a context augmentation function $H$ inserts
its assigned skill into the ordinary policy input:
$\widetilde h_{i,t}=H(h_{i,t},\widehat s_{i,t})$. We retain the original
sampled response and its within-response prefix. Teacher and student
score the same token, using parameters
$\vartheta_T=\bar\theta_u$ and $\vartheta_S=\theta$ and contexts
$h^T_{i,t}=\widetilde h_{i,t}$ and $h^S_{i,t}=h_{i,t}$:
\begin{equation}
\label{eq:teacher_student}
    \ell^b_{i,t,\ell}
    =\log\pi_{\vartheta_b}
      (y_{i,t,\ell}\mid h^b_{i,t},y_{i,t,<\ell}),
    \quad b\in\{T,S\}.
\end{equation}
Teacher log-probabilities are computed once before actor optimization
and cached for the selected steps. Student log-probabilities are
recomputed at each training forward pass as $\theta$ changes.

\paragraph{Bounding the signed learning signal}
Figure~\ref{fig:opsd_stability} shows an anomaly during minibatch
optimization. At outer update 26, the pre-optimization gap is
0.284 nats. During training, the student's mean sampled-token
log-probability drops to $-2716.0$ nats, while the cached teacher
mean remains at $-0.440$ nats. The training-stage gap reaches
2715.5 nats, with a non-finite recorded gradient norm.

Teacher scores are cached before actor optimization. Successive
minibatch updates change $\theta$, so training forwards compare
updated student scores with the fixed teacher cache. A sharp drop
in student probabilities on sampled tokens can therefore turn a
moderate pre-optimization gap into an extreme training-stage
discrepancy. In our training runs, the observed instability arises
from the raw-gap OPSD auxiliary loss rather than the GRPO objective:
the discrepancy directly scales the student's score gradient and
can amplify auxiliary updates.

% OPSD-STABILITY-FIGURE-BEGIN
\begin{figure}[t]
\centering
% \vspace{-1em}
\includegraphics[width=\columnwidth]{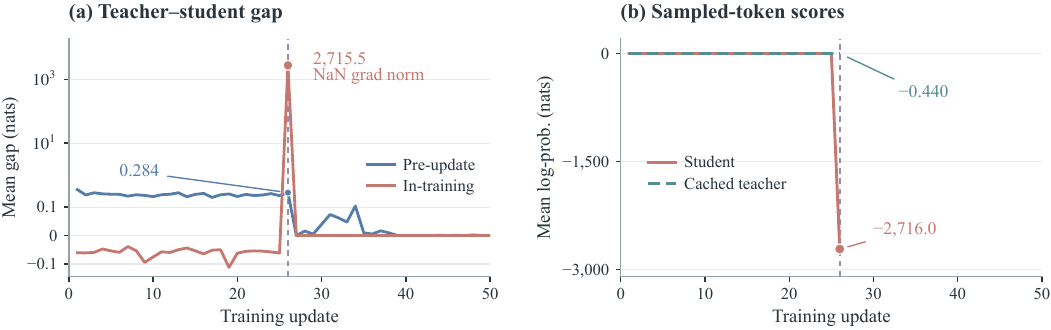}%
% \vspace{-1em}
\caption{\textbf{Anomaly during minibatch optimization.}
(a) Pre-optimization and training-stage gaps (symmetric-log axis).
(b) Current student and cached teacher sampled-token log-probabilities.
Dashed lines mark outer update 26.}
\label{fig:opsd_stability}
\end{figure}
% OPSD-STABILITY-FIGURE-END

We control gap-driven amplification by replacing the raw gap with a
bounded signed coefficient:
\begin{equation}
\label{eq:dist_advantage}
\begin{aligned}
    \Delta_{i,t,\ell}(\theta)
      &=\operatorname{sg}
        [\ell^T_{i,t,\ell}-\ell^S_{i,t,\ell}(\theta)],\\
    \widehat A^{\mathrm{dist}}_{i,t,\ell}
      &=c\,\tanh\!\left(
          \frac{\alpha\Delta_{i,t,\ell}}{c}\right),
\end{aligned}
\end{equation}
where $\operatorname{sg}$ stops gradients through the coefficient.
The parameters $\alpha>0$ and $c>0$ control the slope at zero and
saturation magnitude: $\widehat A^{\mathrm{dist}}\approx\alpha\Delta$
for small gaps, and $|\widehat A^{\mathrm{dist}}|<c$ for finite gaps.
This symmetric transformation preserves the gap's sign and gives
zero guidance when teacher and student agree on the sampled token.

We optimize a sampled-token surrogate rather than a full-distribution
KL, using tokens from the sampled responses. Let $\mathcal T$ denote
the valid, retained response-token positions in the current training
microbatch, and let $\mathcal T_{\mathrm{sel}}$
contain only those positions belonging to selected error steps.
Writing $N_{\mathrm{sel}}=|\mathcal T_{\mathrm{sel}}|$, the
distillation loss is simply the mean over these selected tokens:

\begin{equation}
\label{eq:selective_opsd}
\begin{aligned}
\mathcal L_{\mathrm{OPSD}}(\theta)
={}&-\frac{c}{N_{\mathrm{sel}}}
\sum_{(i,t,\ell)\in\mathcal T}m_{i,t}
\cdot\tanh\!\left(
\frac{\alpha}{c}\operatorname{sg}
[\ell^T_{i,t,\ell}-\ell^S_{i,t,\ell}(\theta)]
\right)
\ell^S_{i,t,\ell}(\theta).
\end{aligned}
\end{equation}

Unselected steps do not enter this loss or its token count. If
$N_{\mathrm{sel}}=0$, we set $\mathcal L_{\mathrm{OPSD}}=0$.

For each selected token, the contribution to the policy gradient is
$-\widehat A^{\mathrm{dist}}\nabla_\theta\ell^S/N_{\mathrm{sel}}$.
The derivative with respect to its student log-probability has
magnitude at most $c/N_{\mathrm{sel}}$. With $c=1$, the coefficient
is confined to $[-1,1]$, controlling the influence of extreme
teacher--student discrepancies on distillation updates. The signed
transformation reinforces positive-gap tokens and suppresses
negative-gap tokens, retaining corrective guidance in both directions.

SDAR~\citep{lu2026self} also addresses OPSD instability with a bounded
sigmoid gate. Its nonnegative auxiliary coefficient attenuates
teacher rejections without retaining their suppressive direction.
SCOPE preserves this negative guidance at selected error steps through
a signed coefficient, while bounding both update directions.
Appendix~\ref{app:bounded_distillation} derives the fixed-data
log-cosh interpretation and auxiliary-gradient bounds;
Appendix~\ref{app:selection_influence} bounds the contribution of
irrelevant selected tokens.

\subsection{Success-Relative Pairwise Efficiency}
\label{sec:pairwise_efficiency}

Selective distillation corrects identified errors. Successful
trajectories can also contain redundant interactions.
Length-aware rewards have been explored to balance correctness and
computational cost in reasoning models. L1~\citep{aggarwal2025l1}
combines correctness with adherence to a specified token-length budget,
while ShorterBetter~\citep{yi2026shorterbetter} uses the shortest
correct response in a sampled group as a dynamic length reference.
For multi-turn agents, we measure efficiency by the number of
environment interactions and compare successful trajectories for the
same task within the current rollout group. This provides a
policy-relative reference as successful interaction lengths evolve
during training.

Let $\mathcal S_q=\{i:\tau_i\in\mathcal G_q^{(u)},\ z_i=1\}$.
For two distinct successful trajectories, we use a Bradley--Terry
pairwise comparison~\citep{bradley1952rank} with utility
$-\log T_i/\tau_{\mathrm{len}}$:

\begin{equation}
\label{eq:pairwise_length}
    p_{ij}=\sigma\!\left(
       \frac{\log T_j-\log T_i}{\tau_{\mathrm{len}}}
    \right),
\end{equation}
where $\sigma(x)=(1+e^{-x})^{-1}$ and $\tau_{\mathrm{len}}>0$
controls the smoothness of the preference. The score exceeds $1/2$ when
trajectory $i$ is shorter than trajectory $j$. Averaging over the
other successful trajectories gives
\begin{equation}
\label{eq:pairwise_score}
    b_i=
    \begin{cases}
    \displaystyle\frac{1}{|\mathcal S_q|-1}
       \sum_{j\in\mathcal S_q\setminus\{i\}}p_{ij},
        & |\mathcal S_q|\geq2,\\[6pt]
    \frac12, & |\mathcal S_q|=1,
    \end{cases}\quad i\in\mathcal S_q.
\end{equation}
Each complete trajectory enters this comparison once, regardless of
how many turn records it produces. We add the resulting efficiency
bonus to the original task reward $R_i$:
\begin{equation}
\label{eq:length_reward}
    R_i^{\mathrm{pair}}=
    \begin{cases}
        R_i+\lambda_{\mathrm{len}}b_i, & z_i=1,\\
        R_i, & z_i=0.
    \end{cases}
\end{equation}
Here $\lambda_{\mathrm{len}}>0$ controls the bonus strength, and
length preferences apply only to successes. A lone success receives
$b_i=1/2$, yielding $R_i+\lambda_{\mathrm{len}}/2$ without
relative-length information. A group with no successes receives no
efficiency bonus.

The comparison is invariant to a common positive rescaling of all
lengths, because $\log(aT_j)-\log(aT_i)=\log T_j-\log T_i$.
The score therefore depends on proportional length differences,
without requiring an absolute target length. Moreover,
$p_{ij}+p_{ji}=1$, so the mean $b_i$ over successes is $1/2$.
The pairwise term allocates a bounded efficiency bonus among successful
trajectories with mean bonus $\lambda_{\mathrm{len}}/2$.
Appendix~\ref{app:pairwise_properties} proves the ordering, scale
invariance, and bonus-conservation properties of this reward.

\subsection{Joint Policy Optimization}
\label{sec:joint_optimization}

We combine GRPO with selective self-distillation. 
For each task $q$, we normalize the task-and-length
reward $R_i^{\mathrm{pair}}$ across $N$ sampled trajectories:
\begin{equation}
\label{eq:trajectory_advantage}
A_i^{\mathrm{RL}}
=\frac{R_i^{\mathrm{pair}}-\mu_q}
{\sigma_q+\epsilon_A},
\end{equation}
where $\mu_q$ and $\sigma_q$ denote the mean and standard
deviation of the rewards within the task group, and
$\epsilon_A>0$ ensures numerical stability.
All steps and tokens in trajectory $\tau_i$ share
the same advantage $A_i^{\mathrm{RL}}$.
With the clipped surrogate from PPO~\citep{schulman2017proximal},
the GRPO objective is
\begin{equation}
\label{eq:grpo_objective}
\begin{aligned}
    \mathcal L_{\mathrm{GRPO}}(\theta)
      ={}&-\mathbb E_{\mathcal T}\!\big[\min\{
          \rho A^{\mathrm{RL}},
          \operatorname{clip}(\rho,1-\epsilon,1+\epsilon)
          A^{\mathrm{RL}}\}\big]\\
        &+\beta_{\mathrm{ref}}
          D_{\mathrm{KL}}(\pi_\theta\|\pi_{\mathrm{ref}})
          -\lambda_H\mathcal H_\theta.
\end{aligned}
\end{equation}
Here $\mathbb E_{\mathcal T}$ averages over retained response tokens,
with $A^{\mathrm{RL}}=A_i^{\mathrm{RL}}$ for tokens belonging to
trajectory $i$; token indices are omitted inside the expectation.
The importance ratio is
$\rho_{i,t,\ell}(\theta)=\exp(\ell^S_{i,t,\ell}(\theta)
-\ell^{\mathrm{old}}_{i,t,\ell})$, where $\ell^{\mathrm{old}}$ is
the sampled-token log-probability under the rollout policy.
The parameter $\epsilon$ is the clipping radius.
The coefficients $\beta_{\mathrm{ref}}$ and
$\lambda_H$ weight the KL penalty to the fixed reference policy
$\pi_{\mathrm{ref}}$ and the policy entropy $\mathcal H_\theta$,
respectively. The entropy bonus encourages continued exploration.

We add the selective OPSD loss to the RL objective:
\begin{equation}
\label{eq:joint_objective}
    \mathcal L_{\mathrm{total}}(\theta)
      =\mathcal L_{\mathrm{GRPO}}(\theta)
       +\lambda\mathcal L_{\mathrm{OPSD}}(\theta).
\end{equation}
The coefficient $\lambda$ balances outcome-based
learning with corrective supervision at selected error steps.
After each outer iteration, the updated policy serves as
actor and analyzer for the next round.
% METHOD-END

\section{Experiments}
\label{sec:experiments}

We evaluate SCOPE through six research questions:

\begin{itemize}[
    leftmargin=1.5em,
    labelsep=0.5em,
    itemsep=2pt,
    topsep=3pt,
    parsep=0pt
]
    \item \textbf{RQ1: Overall Effectiveness.}
    Does SCOPE outperform outcome-based RL and
    skill-guided baselines across tasks and backbones?
    (Section~\ref{sec:main_results})

    \item \textbf{RQ2: Error Localization.}
    What limitations arise from trajectory-level
    supervision, and can SCOPE reliably identify
    critical error steps?
    (Sections~\ref{sec:supervision_diagnostics}
    and~\ref{sec:error_localization})

    \item \textbf{RQ3: Component Contributions.}
    How do supervised initialization, selective
    skill guidance, and pairwise length rewards
    affect performance and efficiency?
    (Section~\ref{sec:ablation_studies})

    \item \textbf{RQ4: Optimization Behavior.}
    How do training efficiency, stability, and
    hyperparameter choices affect SCOPE?
    (Sections~\ref{sec:training_dynamics}
    and~\ref{sec:parameter_sensitivity})

    \item \textbf{RQ5: Unseen Generalization.}
    Does SCOPE generalize to unseen tasks while
    improving interaction efficiency?
    (Section~\ref{sec:unseen_generalization})
    
\item \textbf{RQ6: Larger-Model Scalability.}
Can SCOPE maintain its effectiveness and
interaction efficiency when extended to
Qwen3-32B?
(Section~\ref{sec:scaling_32b})

\end{itemize}

\subsection{Experimental Setup}
\label{sec:experimental_setup}

\paragraph{Benchmarks and metrics}
We evaluate SCOPE on ALFWorld and WebShop, following the evaluation
protocols of SDAR~\citep{lu2026self}.
\textbf{ALFWorld}~\citep{shridhar2020alfworld} tests household task
completion through text-based interaction. Each validation round contains 128 episodes
from the seen split, covering six task families: Pick,
Look, Clean, Heat, Cool, and Pick2. We report success rates by family
and the overall success rate across the 128 episodes
(micro-average).
\textbf{WebShop}~\citep{yao2022webshop} requires agents to search,
inspect, and purchase products satisfying a user's requirements.
We evaluate on the same fixed set of 128 tasks and report two metrics:
the mean normalized task score, which measures how well the purchased
product meets the requirements, and the success rate, which counts
fully completed tasks.
All reported scores are percentages.

\paragraph{Backbones and evaluation context}
We use Qwen2.5-3B-Instruct, Qwen2.5-7B-Instruct~\citep{hui2024qwen2},
and Qwen3-1.7B~\citep{yang2025qwen3} to examine performance across
model sizes and generations. Comparisons are made within each
backbone. At evaluation, SCOPE acts from the ordinary interaction
history without hindsight skills or analyzer calls.
For the main ALFWorld and WebShop comparison, we report SCOPE's
validation metrics logged at training step 150.
The corresponding single-run validation curves
for both Qwen2.5 backbones are provided in
Appendix~\ref{app:validation_curves}.
An asterisk marks a baseline that receives skills at evaluation;
the other baselines use their ordinary policy inputs.
Key training settings and diagnostic protocols are provided in
Appendix~\ref{app:experimental_details}.

\paragraph{Baselines}
We compare against eight baselines.
\textbf{Vanilla} evaluates the original backbone without additional
agentic training.
\textbf{Skill-Prompt*} supplies retrieved skills in the evaluation
prompt without updating the model, following the prompting baseline
in SDAR~\citep{lu2026self}.
\textbf{GRPO}~\citep{shao2024deepseekmath} learns from group-relative
task rewards without an auxiliary distillation objective.

The remaining methods incorporate self-distillation or hindsight
guidance into agentic RL.
\textbf{AgentOPSD}~\citep{wang2026agentopsd} aggregates teacher--student
gaps by turn and uses recursive belief updates to assign turn-level
credit.
\textbf{SDAR}~\citep{lu2026self} retrieves teacher guidance from a
skill bank and controls auxiliary self-distillation with bounded
token-level gates.
\textbf{OPID}~\citep{yang2026opid} uses an external analyzer to
construct hierarchical skills, retaining episode-level guidance
at non-critical steps.
\textbf{GRSD}~\citep{zheng2026group} uses policy-generated
reflections scored by an external judge to construct group-level
guidance and refine turn-level credit assignment.
\textbf{SEED}~\citep{wu2026seed} initializes hindsight analysis through
SFT and iteratively uses the updated policy to generate and distill
trajectory-level skills.
% EXPERIMENT-SETUP-END

% MAIN-TABLE-BEGIN
% Requires: booktabs, xcolor[table], adjustbox, hyperref.
% Reuses the author's \scopebest, \scopesecond, and \scopetablecite macros.
% Preserve existing color definitions: scopebestcol, scopesecondcol, scopeband.
% Do not redefine the palette inside this table.

\begin{table}[t]
\centering
\caption{\textbf{Main results on ALFWorld and WebShop (\%).}
ALFWorld reports task success; WebShop reports normalized score and
success rate. * denotes skills at evaluation.
\colorbox{scopebestcol}{\textbf{Best}} and
\colorbox{scopesecondcol}{\underline{second-best}} results are highlighted.}
\label{tab:main_results}
\begingroup
\hypersetup{colorlinks=true,citecolor=scopecitecol}
\rmfamily

% A little larger and airier than the compressed version.
\fontsize{9.1}{11.4}\selectfont
\setlength{\tabcolsep}{2.8pt}
\renewcommand{\arraystretch}{1.10}
\setlength{\aboverulesep}{0.33ex}
\setlength{\belowrulesep}{0.33ex}
\setlength{\heavyrulewidth}{0.7pt}
\setlength{\lightrulewidth}{0.38pt}
\setlength{\cmidrulewidth}{0.38pt}

\begin{adjustbox}{max width=\linewidth}
\begin{tabular}{@{}ll*{9}{c}@{}}
\toprule
& & \multicolumn{7}{c}{\textbf{ALFWorld}}
    & \multicolumn{2}{c}{\textbf{WebShop}} \\
\cmidrule(lr){3-9}\cmidrule(l){10-11}
\textbf{Method} & \textbf{Skill Type}
& \textbf{Pick} & \textbf{Look} & \textbf{Clean} & \textbf{Heat}
& \textbf{Cool} & \textbf{Pick2} & \textbf{Avg}
& \textbf{Score} & \textbf{Succ.} \\
\midrule

\rowcolor{scopeband}\multicolumn{11}{@{\hspace{4pt}}l@{}}{\rule{0pt}{2.65ex}\textit{Qwen2.5-3B-Instruct}} \\
\addlinespace[1pt]
Vanilla~\scopetablecite{hui2024qwen2} & None & 44.4 & 11.1 & 6.2 & 15.4 & 28.6 & 12.5 & 21.9 & 6.7 & 0.8 \\
Skill-Prompt*~\scopetablecite{xia2026skillrl} & Skill bank & 51.7 & 66.7 & 48.4 & 0.0 & 4.3 & 10.0 & 28.9 & 0.2 & 0.8 \\
GRPO~\scopetablecite{shao2024deepseekmath} & None & 91.2 & 62.5 & \scopesecond{96.2} & 61.9 & 65.0 & 47.4 & 75.0 & 79.8 & 63.3 \\
AgentOPSD~\scopetablecite{wang2026agentopsd} & Skill bank & \scopebest{100.0} & 68.8 & 86.4 & 66.7 & \scopesecond{84.6} & 73.7 & 84.4 & \scopebest{90.4} & 69.5 \\
SDAR~\scopetablecite{lu2026self} & Skill bank & \scopesecond{97.1} & 62.5 & \scopebest{100.0} & 61.9 & 75.0 & 84.2 & 84.4 & 85.0 & 68.0 \\
OPID~\scopetablecite{yang2026opid} & Hybrid skills & 92.7 & \scopebest{100.0} & 88.9 & 70.0 & 84.2 & 70.0 & 84.3 & 85.0 & 74.2 \\
GRSD~\scopetablecite{zheng2026group} & Traj. skill & \scopebest{100.0} & \scopesecond{81.3} & 91.3 & \scopesecond{88.2} & 70.8 & \scopesecond{85.7} & 86.7 & 86.9 & 76.5 \\
SEED~\scopetablecite{wu2026seed} & Traj. skill & \scopebest{100.0} & \scopebest{100.0} & \scopebest{100.0} & \scopebest{100.0} & 70.6 & 80.0 & \scopesecond{91.8} & \scopesecond{88.5} & \scopesecond{78.9} \\
\addlinespace[2pt]
\textbf{SCOPE (Ours)} & \textbf{Step skills} & \scopebest{100.0} & \scopebest{100.0} & \scopebest{100.0} & \scopebest{100.0} & \scopebest{85.7} & \scopebest{100.0} & \scopebest{97.7} & 87.9 & \scopebest{85.2} \\

\midrule
\addlinespace[2pt]
\rowcolor{scopeband}\multicolumn{11}{@{\hspace{4pt}}l@{}}{\rule{0pt}{2.65ex}\textit{Qwen2.5-7B-Instruct}} \\
\addlinespace[1pt]
Vanilla~\scopetablecite{hui2024qwen2} & None & 36.1 & 22.2 & 3.1 & 0.0 & 0.0 & 0.0 & 12.5 & 5.9 & 1.6 \\
Skill-Prompt*~\scopetablecite{xia2026skillrl} & Skill bank & 51.7 & 50.0 & 32.3 & 5.3 & 4.3 & 0.0 & 23.4 & 1.7 & 0.8 \\
GRPO~\scopetablecite{shao2024deepseekmath} & None & 91.2 & \scopesecond{87.5} & 96.2 & 81.0 & 65.0 & 57.9 & 81.2 & 80.9 & 72.6 \\
AgentOPSD~\scopetablecite{wang2026agentopsd} & Skill bank & 91.2 & \scopesecond{87.5} & \scopebest{100.0} & 90.5 & 75.0 & 84.2 & 89.1 & \scopesecond{90.2} & 79.7 \\
SDAR~\scopetablecite{lu2026self} & Skill bank & \scopesecond{94.7} & 75.0 & \scopebest{100.0} & 86.7 & 68.2 & 78.9 & 85.9 & 89.4 & \scopesecond{82.8} \\
OPID~\scopetablecite{yang2026opid} & Hybrid skills & \scopebest{100.0} & 81.8 & \scopesecond{97.1} & \scopebest{100.0} & 80.8 & 80.0 & 90.0 & 85.3 & 79.7 \\
GRSD~\scopetablecite{zheng2026group} & Traj. skill & \scopebest{100.0} & \scopebest{100.0} & \scopebest{100.0} & \scopesecond{93.4} & 66.7 & 82.4 & 92.2 & \scopebest{91.6} & \scopesecond{82.8} \\
SEED~\scopetablecite{wu2026seed} & Traj. skill & \scopebest{100.0} & \scopebest{100.0} & 96.3 & 80.0 & \scopebest{100.0} & \scopebest{100.0} & \scopesecond{96.1} & 89.7 & 78.1 \\
\addlinespace[2pt]
\textbf{SCOPE (Ours)} & \textbf{Step skills} & \scopebest{100.0} & \scopebest{100.0} & \scopebest{100.0} & \scopebest{100.0} & \scopesecond{95.2} & \scopesecond{94.7} & \scopebest{98.4} & 89.6 & \scopebest{84.4} \\

\midrule
\addlinespace[2pt]
\rowcolor{scopeband}\multicolumn{11}{@{\hspace{4pt}}l@{}}{\rule{0pt}{2.65ex}\textit{Qwen3-1.7B}} \\
\addlinespace[1pt]
Vanilla~\scopetablecite{yang2025qwen3} & None & 25.0 & 22.2 & 3.1 & 0.0 & 21.4 & 4.2 & 12.5 & 46.5 & 4.7 \\
Skill-Prompt*~\scopetablecite{xia2026skillrl} & Skill bank & 10.3 & 50.0 & 16.1 & 0.0 & 0.0 & 5.0 & 9.4 & 23.0 & 2.3 \\
GRPO~\scopetablecite{shao2024deepseekmath} & None & 71.1 & 41.7 & 36.4 & 40.0 & 31.8 & 31.6 & 46.1 & 67.3 & 38.3 \\
AgentOPSD~\scopetablecite{wang2026agentopsd} & Skill bank & 35.5 & 50.0 & 31.6 & 30.0 & 46.2 & 25.0 & 35.9 & 66.8 & 47.7 \\
SDAR~\scopetablecite{lu2026self} & Skill bank & 73.5 & 25.0 & 76.9 & 33.3 & 40.0 & 36.8 & 53.9 & 76.8 & 58.6 \\
OPID~\scopetablecite{yang2026opid} & Hybrid skills & 65.9 & \scopesecond{72.7} & 66.7 & 40.0 & 63.2 & 45.0 & 58.9 & 79.6 & 64.8 \\
GRSD~\scopetablecite{zheng2026group} & Traj. skill & 92.5 & \scopebest{100.0} & \scopebest{100.0} & \scopesecond{91.3} & \scopesecond{81.3} & 88.2 & \scopebest{92.2} & \scopesecond{84.5} & 66.4 \\
SEED~\scopetablecite{wu2026seed} & Traj. skill & \scopesecond{97.6} & \scopebest{100.0} & \scopebest{100.0} & 80.0 & \scopebest{84.2} & \scopebest{90.0} & \scopesecond{92.0} & \scopebest{87.1} & \scopesecond{77.3} \\
\addlinespace[2pt]
\textbf{SCOPE (Ours)} & \textbf{Step skills} & \scopebest{100.0} & \scopebest{100.0} & \scopesecond{96.9} & \scopebest{100.0} & 66.7 & \scopesecond{89.5} & \scopebest{92.2} & 83.2 & \scopebest{78.9} \\
\bottomrule
\end{tabular}
\end{adjustbox}
\endgroup
\end{table}
% MAIN-TABLE-END

\subsection{Main Results}
\label{sec:main_results}

Table~\ref{tab:main_results} compares SCOPE with eight baselines
across three backbones, grouped by their auxiliary guidance types.
SCOPE achieves the highest WebShop success across all backbones
and the highest or tied highest ALFWorld success.

\paragraph{Beyond outcome-level supervision}
Compared with GRPO, SCOPE improves ALFWorld success by
22.7, 17.2, and 46.1 percentage points on Qwen2.5-3B,
Qwen2.5-7B, and Qwen3-1.7B, respectively.
The corresponding WebShop gains are 21.9, 11.8, and
40.6 points. These consistent improvements suggest
the benefit of complementing coarse outcome rewards
with localized corrective supervision, rather than
relying solely on trajectory-level feedback.

\paragraph{Self-bootstrapped error analysis versus external supervision}
OPID relies on an external analyzer to identify critical steps
and generate hierarchical skills, while GRSD uses an external
judge to evaluate policy-generated reflections.
In contrast, SCOPE internalizes error localization and skill
generation through supervised initialization, enabling the
policy to provide its own corrective guidance during online RL.
Compared with OPID, SCOPE improves ALFWorld success by
13.4, 8.4, and 33.3 percentage points across the three
backbones. It also outperforms GRSD by 11.0 and 6.2 points
on Qwen2.5-3B and Qwen2.5-7B, respectively, while
matching its ALFWorld success on Qwen3-1.7B.
These results support the effectiveness of self-bootstrapped
corrective guidance without external online analysis.

\paragraph{Selective step guidance versus trajectory-level skills}
Compared with trajectory-level SEED, SCOPE improves
ALFWorld success by 5.9 and 2.3 points on the
Qwen2.5-3B and Qwen2.5-7B backbones, respectively,
while gaining 6.3 points on WebShop for both.
On Qwen3-1.7B, SCOPE ties GRSD at 92.2\% ALFWorld
success and exceeds SEED and GRSD on WebShop success.
Notably, SCOPE achieves the highest WebShop success
across all backbones without consistently leading
in normalized task score, suggesting stronger exact
task completion rather than uniformly better partial
matches. These results are consistent with the value
of aligning distillation with selected error steps
instead of applying trajectory-level guidance broadly.

We further examine whether these improvements
extend to a larger 32B backbone in
Section~\ref{sec:scaling_32b}.

\subsection{Diagnosing Trajectory-Level Supervision}
\label{sec:supervision_diagnostics}

To examine the limitations of trajectory-level
supervision, we analyze the alignment between task
outcomes, step quality, and skill-conditioned
distillation signals.
Figure~\ref{fig:intro_overview}(a) reveals substantial
outcome--step mismatch in annotated WebShop trajectories:
29.8\% of steps in successful trajectories are erroneous,
while 42.4\% of steps in failed trajectories are correct.
This disagreement indicates that terminal outcomes
alone cannot reliably identify which decisions
require correction.

Figure~\ref{fig:intro_overview}(b) further reveals
that skill-unrelated steps account for 57.9\% of
nonzero sampled-token teacher--student gaps.
Notably, their negative and positive gaps have
similar magnitudes to those at skill-relevant steps
($-0.378/+0.230$ versus $-0.400/+0.249$ nats).
This suggests that gap magnitude alone provides
limited information about the relevance of
corrective guidance. Together, these findings
motivate explicit error localization and
selective error-step distillation, rather than
applying skill-conditioned supervision broadly.
Detailed diagnostic protocols are provided in
Appendix~\ref{app:diagnostic_details}.

\begin{figure}[t]
\centering
\includegraphics[width=\textwidth]{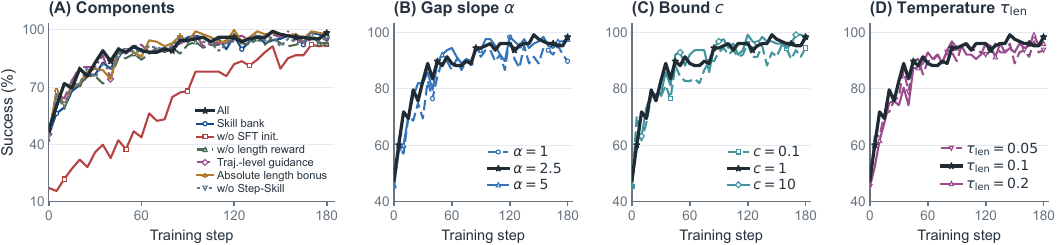}%
% \vspace{-1em}
\caption{Training dynamics on ALFWorld with Qwen2.5-3B.
(A) Component ablations. (B) Signed-gap slope $\alpha$.
(C) Tanh saturation bound $c$. (D) Pairwise length temperature
$\tau_{\mathrm{len}}$. The full setting is $\alpha=2.5$, $c=1$,
and $\tau_{\mathrm{len}}=0.1$, labeled All in (A) and by its parameter
value in (B)--(D). Trajectory-level guidance uses one skill per
trajectory and distills all steps. Lines connect observed checkpoints;
the success axes start at 10\% in (A) and 40\% in (B)--(D).}
\label{fig:scope_ablation_training}
% \vspace{-1em}
\end{figure}

% ABLATION-STUDY-BEGIN
\subsection{Ablation Studies}
\label{sec:ablation_studies}

We evaluate six variants with Qwen2.5-3B on ALFWorld
(Table~\ref{tab:scope_ablations}) to examine the contributions of
SCOPE's components and different forms of guidance. Together with
the parameter sweeps in Section~\ref{sec:parameter_sensitivity},
these experiments also examine the effect of guidance strength.
All endpoint comparisons in this section use training step 180,
where the full configuration achieves 98.4\% overall success.

\begin{table}[t]
\centering

\caption{\textbf{Ablation results on ALFWorld (\%).}
Qwen2.5-3B at training step 180.
We evaluate the effects of supervised initialization,
skill guidance, and length-reward design.
Traj.-level guidance assigns one skill per trajectory
and distills across all steps; skill-bank guidance
uses retrieved skills. Overall success is micro-averaged
across evaluation episodes. Best values, including ties,
are bold.}

\label{tab:scope_ablations}

\begingroup
\rmfamily
\fontsize{9.2}{11.2}\selectfont
\setlength{\tabcolsep}{3.6pt}
\renewcommand{\arraystretch}{1.12}

% Booktabs spacing
\setlength{\aboverulesep}{0.3ex}
\setlength{\belowrulesep}{0.3ex}
\setlength{\heavyrulewidth}{0.65pt}
\setlength{\lightrulewidth}{0.35pt}
\setlength{\cmidrulewidth}{0.35pt}

\begin{adjustbox}{max width=\linewidth}
\begin{tabular}{l*{7}{c}}
\toprule
& \multicolumn{7}{c}{\textbf{ALFWorld}} \\
\cmidrule(lr){2-8}

\textbf{Variant}
& \textbf{Pick}
& \textbf{Look}
& \textbf{Clean}
& \textbf{Heat}
& \textbf{Cool}
& \textbf{Pick2}
& \textbf{Overall} \\
\midrule

\rowcolor{scopeablationfull}
\textbf{Full SCOPE}
& \textbf{100.0}
& \textbf{100.0}
& \textbf{100.0}
& \textbf{100.0}
& \textbf{100.0}
& 92.0
& \textbf{98.4} \\

\midrule

% Initialization and guidance
\rowcolor{scopeband}
\multicolumn{8}{l}{
\rule{0pt}{2.5ex}\textit{Initialization and guidance}
} \\

\quad w/o SFT initialization
& \textbf{100.0} & 92.9 & \textbf{100.0}
& 70.6 & 83.3 & \textbf{94.7} & 92.2 \\

\quad w/o Step-Skill
& \textbf{100.0} & 92.9 & \textbf{100.0}
& 94.1 & 94.4 & 89.5 & 96.1 \\

\quad Traj.-level guidance
& \textbf{100.0} & 92.9 & \textbf{100.0}
& 88.2 & \textbf{100.0} & 89.5 & 96.1 \\

\quad Skill-bank guidance
& \textbf{100.0} & \textbf{100.0} & 96.6
& 76.5 & \textbf{100.0} & \textbf{94.7} & 95.3 \\

\midrule

% Length-reward design
\rowcolor{scopeband}
\multicolumn{8}{l}{
\rule{0pt}{2.5ex}\textit{Length-reward design}
} \\

\quad w/o length reward
& 96.8 & 91.7 & \textbf{100.0}
& \textbf{100.0} & 92.3 & 75.0 & 93.0 \\

\quad Absolute length bonus
& \textbf{100.0} & \textbf{100.0}
& \textbf{100.0} & 94.1
& \textbf{100.0} & 89.5 & 97.7 \\

\bottomrule
\end{tabular}
\end{adjustbox}

\endgroup
\end{table}

\paragraph{Supervised initialization supports subsequent self-analysis}
Removing SFT initialization lowers overall success from 98.4\% to
92.2\%, a 6.3-point drop. Its effect is also visible during training:
at step 100, the full configuration reaches 96.1\%, compared with
78.1\% without SFT (Figure~\ref{fig:scope_ablation_training}(A)).
Heat is especially affected at the final checkpoint, falling from
100.0\% to 70.6\%. These results highlight the need to first
equip the model to locate errors and generate corrective guidance
before relying on its own analyses for online supervision.

\paragraph{Matching skills to target steps improves guidance}
Replacing step-specific skills and selective coverage with one skill
per trajectory and distillation over all steps reduces success to
96.1\%, a 2.3-point drop. Replacing the generated guidance with
skill-bank guidance yields 95.3\%, a 3.1-point drop. Together with
the unrelated-step signals in Section~\ref{sec:supervision_diagnostics},
these results support constructing guidance from the current
trajectory and restricting its application to the relevant decisions.
The trajectory-level variant changes skill granularity and coverage
jointly, so its result supports their design.

% \Needspace{12\baselineskip}
\begin{wrapfigure}{r}{0.42\columnwidth}
\vspace{-1em}
    \centering
    \vspace{-0.5\baselineskip}
    \includegraphics[
        width=\linewidth,
        height=0.22\textheight,
        keepaspectratio
    ]{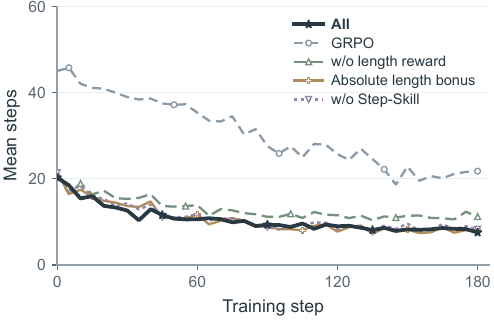}
\caption{\textbf{Interaction efficiency during training.}
Mean environment interaction steps on ALFWorld
across different training configurations.}
\label{fig:scope_response_steps}

    \label{fig:scope_response_steps}
    \vspace{-0.5\baselineskip}
\end{wrapfigure}
% \noindent

\paragraph{Pairwise length rewards promote sustained efficiency}
Removing the length reward reduces overall success
from 98.4\% to 93.0\%, while replacing the pairwise
reward with an absolute length bonus yields 97.7\%.
Unlike absolute length shaping, our pairwise reward
compares successful trajectories within each on-policy
group, adapting its preferences as the policy improves.
Figure~\ref{fig:scope_response_steps} shows that SCOPE
rapidly reduces interaction steps during early training
and maintains low interaction counts in later stages,
consistently outperforming GRPO in efficiency.
Compared with removing length rewards, pairwise shaping
generally maintains shorter trajectories after the
initial training phase. Together, these results
support efficient task completion without sacrificing
success.

% ABLATION-CURVES-BEGIN
\subsection{Training Dynamics}
\label{sec:training_dynamics}

To examine learning efficiency and optimization stability,
we compare the training trajectories of SCOPE and GRPO
and investigate the impact of unbounded OPSD updates.

\paragraph{SCOPE reaches strong performance with fewer updates}
Figure~\ref{fig:intro_overview}(c) shows that SCOPE
reaches 65.1\% mean success on WebShop at step 60,
surpassing GRPO's final 63.0\% at step 150,
requiring 60.0\% fewer online updates.
On ALFWorld, SCOPE first exceeds GRPO's final
success rate of 76.6\% at step 25, compared with
step 145 for GRPO, reducing the required updates
by 82.8\%. At step 180, SCOPE achieves 98.4\%
success, compared with 76.6\% at GRPO's final
checkpoint. These comparisons measure online
updates after supervised initialization,
demonstrating faster performance improvement
under a smaller RL update budget.

\paragraph{Bounded signed distillation stabilizes training}
Figure~\ref{fig:intro_overview}(c) further shows that
Unbounded SCOPE suffers an abrupt performance collapse
after initial improvement, whereas bounded SCOPE
maintains stable training.
Figure~\ref{fig:opsd_stability} reveals the underlying
numerical instability: sharply reduced student
log-probabilities produce extreme teacher--student
gaps, amplifying raw-gap OPSD updates during
minibatch optimization.
The bounded signed coefficient in
Equation~\ref{eq:dist_advantage} limits this
gap-driven amplification while preserving
both positive and negative corrections.
Together, these results support stable and
efficient online optimization.

% ABLATION-CURVES-END
% ABLATION-STUDY-END

% PARAMETER-SENSITIVITY-BEGIN
\subsection{Parameter Sensitivity}
\label{sec:parameter_sensitivity}

\begin{figure}[t]
    \centering
    % Left: Parameter sensitivity curves
    \begin{minipage}[c]{0.6\columnwidth}
        \centering
        \includegraphics[width=\linewidth]
        {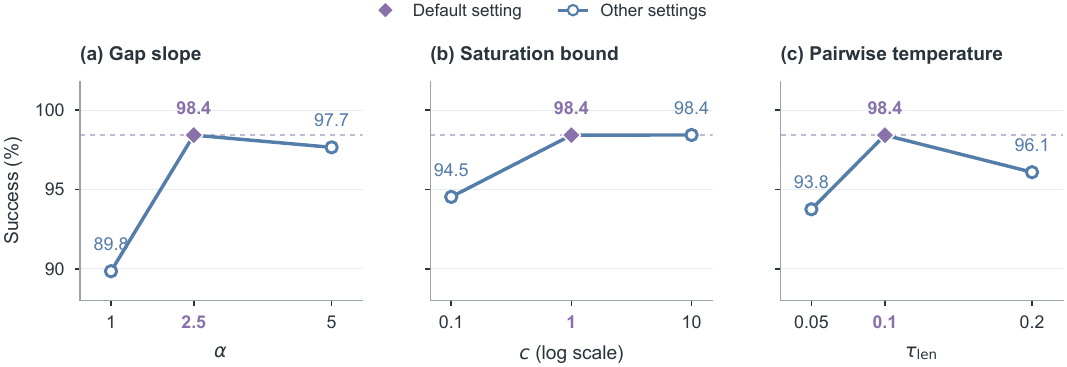}
    \end{minipage}%
    \hfill
    % Right: Task-wise sensitivity heatmap
    \begin{minipage}[c]{0.38\columnwidth}
        \centering
        \includegraphics[width=\linewidth]
        {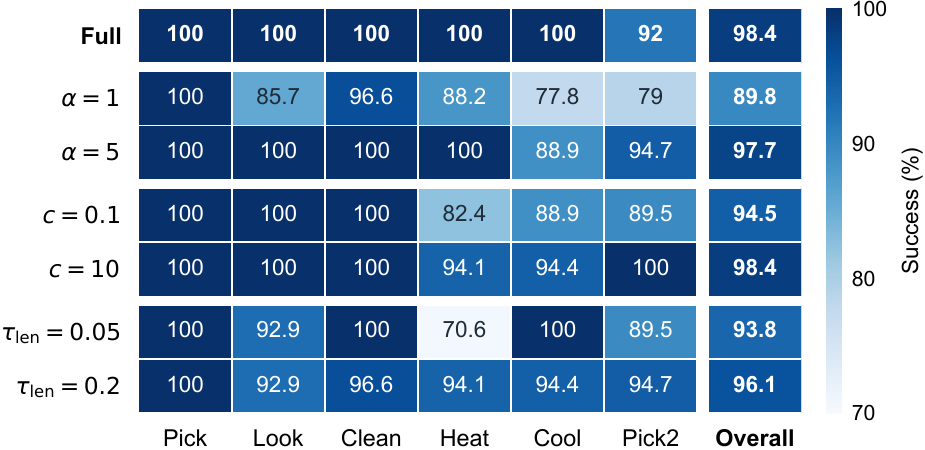}
    \end{minipage}

    \caption{\textbf{Parameter sensitivity of SCOPE.}
    Left: effects of gap slope $\alpha$, saturation
    bound $c$, and pairwise temperature $\tau_{\mathrm{len}}$
    on task success. Right: task-wise performance
    under different parameter settings, where
    \textit{Full} denotes the default configuration
    ($\alpha=2.5$, $c=1$, $\tau_{\mathrm{len}}=0.1$).}
    \label{fig:parameter_sensitivity}
\end{figure}

To examine the sensitivity of corrective supervision
and length-aware optimization, we vary the signed-gap
slope $\alpha$, saturation bound $c$, and pairwise
temperature $\tau_{\mathrm{len}}$ individually.
The remaining parameters follow the default configuration
($\alpha=2.5$, $c=1$, $\tau_{\mathrm{len}}=0.1$).
Figure~\ref{fig:parameter_sensitivity} reports overall
and task-wise ALFWorld success at training step 180.

\paragraph{The signed-gap slope controls corrective strength}
Reducing $\alpha$ to 1 lowers overall success from
98.4\% to 89.8\%, with notable declines on Cool
and Look. A smaller slope weakens the coefficient
for moderate teacher--student gaps, potentially
limiting corrective supervision.
Increasing $\alpha$ to 5 maintains 97.7\% success,
suggesting that effective learning does not depend
on the exact default slope.

\paragraph{Overly tight saturation limits correction}
Reducing $c$ to 0.1 lowers overall success to
94.5\%, with Heat dropping to 82.4\%.
This suggests that early coefficient saturation
can restrict useful corrective updates.
Both $c=1$ and $c=10$ achieve 98.4\% overall
success, indicating that SCOPE performs well
across different saturation bounds.

\paragraph{Moderate pairwise preferences favor performance}
Setting $\tau_{\mathrm{len}}$ to 0.05 or 0.2 yields
93.8\% and 96.1\% success, respectively,
compared with 98.4\% at the default 0.1.
Lower temperatures produce sharper length
preferences, while higher temperatures soften
pairwise differences. The intermediate setting
achieves the highest success in this sweep,
supporting a balance between length discrimination
and corrective learning.
% PARAMETER-SENSITIVITY-END

\begin{figure}[t]
    \centering

    % Left: Error Localization
    \begin{minipage}[t]{0.59\columnwidth}
        \centering
        \vspace{0pt}
        \includegraphics[width=\linewidth]
        {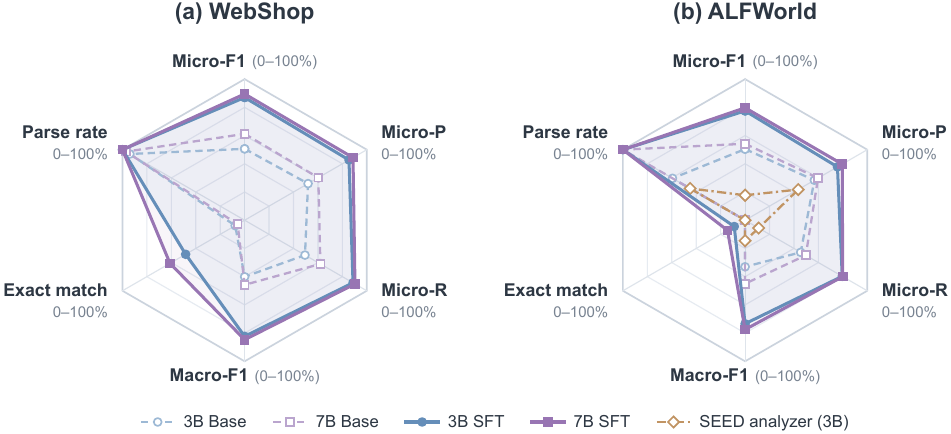}
        
\caption{\textbf{Error-step localization.}
Comparison of base, SFT-initialized, and SEED analyzers
on WebShop and ALFWorld, evaluated using
precision, recall, F1, and parsing accuracy.}
\label{fig:error_localization}
    \end{minipage}%
    \hfill
    % Right: Unseen Generalization
    % \vspace{-0.3em}
    \begin{minipage}[t]{0.39\columnwidth}
        \centering
        \vspace{0pt}
        \includegraphics[width=\linewidth]
        {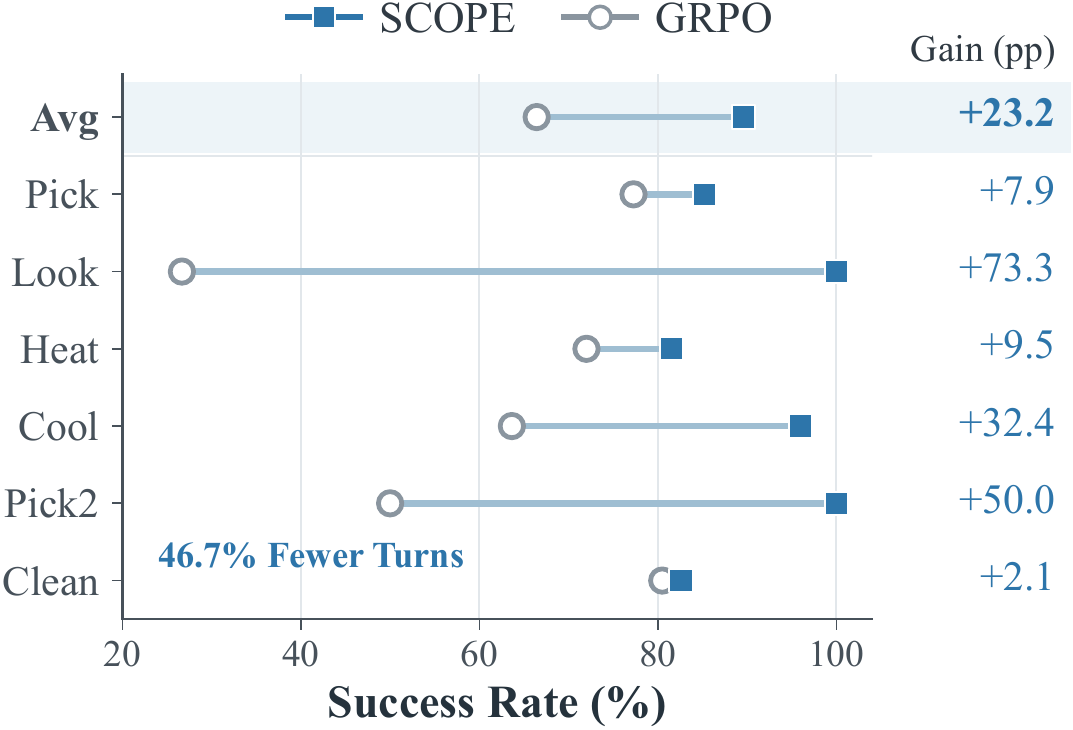}
\caption{\textbf{Unseen ALFWorld.}
Generalization to unseen instances across six
task families, comparing success and efficiency
with GRPO.}
\label{fig:scope_unseen_generalization}
    \end{minipage}
\end{figure}

\subsection{Error-Localization Capability}
\label{sec:error_localization}

Since SCOPE relies on self-generated error analysis
to select critical steps, reliable error localization
is essential for effective corrective supervision.
We therefore evaluate the localization capability
of base and SFT-initialized models on WebShop and
ALFWorld, alongside a SEED-style analyzer.
Figure~\ref{fig:error_localization} reveals three
key findings.

First, larger models generally exhibit stronger
error-step localization, with 7B outperforming 3B
across most metrics. Second, SFT substantially
improves both backbones, with 3B SFT even surpassing
7B Base in micro-F1, demonstrating that smaller
models can acquire effective error-analysis capabilities.
Third, the SEED-style analyzer exhibits markedly
weaker localization than our step-specific analyzer,
particularly on ALFWorld, suggesting that
trajectory-level hindsight training may be less
effective for fine-grained error identification.
These results support supervised initialization
for reliable online self-analysis.
Metric definitions are provided in
Appendix~\ref{app:error_localization_metrics}.

% GENERALIZATION-BEGIN
\subsection{Generalization to Unseen Tasks}
\label{sec:unseen_generalization}

\paragraph{SCOPE generalizes to unseen task instances}
To examine whether SCOPE transfers beyond the tasks
encountered during training, we evaluate its performance
on the unseen ALFWorld split, covering six household
task families. As shown in
Figure~\ref{fig:scope_unseen_generalization},
SCOPE achieves 89.6\% overall success compared with
66.4\% for GRPO, a gain of 23.2 percentage points.
Improvements span all six families, particularly
Look (+73.3 points), Pick2 (+50.0), and Cool (+32.4).
These results demonstrate that SCOPE's learned
corrective behavior generalizes to unfamiliar
task instances across diverse household routines.

\paragraph{Generalization gains accompany greater efficiency}
Beyond higher task success, SCOPE also reduces
average interactions from 18.86 to 10.06 turns,
a 46.7\% reduction relative to GRPO.
This improvement indicates that SCOPE generalizes
without relying on longer interaction sequences,
achieving more successful completions with fewer
environment interactions. Together with the
length-reward ablation, these findings support
the effectiveness of efficiency-aware agentic RL.
% GENERALIZATION-END

% 32B-SCALING-BEGIN
\subsection{Scaling SCOPE to Qwen3-32B}
\label{sec:scaling_32b}

To investigate whether SCOPE remains effective
at a larger model scale, we extend our evaluation
to Qwen3-32B on ALFWorld, covering both seen
and unseen task instances. We compare SCOPE
against GRPO using the same backbone.
Table~\ref{tab:scope_32b} reports the results,
while Figure~\ref{fig:scope_32b_curves} presents
single-run training dynamics on the seen split.

\begin{table}[t]
\centering
\caption{\textbf{Qwen3-32B results on ALFWorld.}
Seen results are reported at training step 150,
while unseen evaluation covers 134 episodes.
Task-family and overall success rates are percentages;
mean steps measure environment interactions per episode.
Best values within each split are bold.}
\label{tab:scope_32b}

\begingroup
\rmfamily
\fontsize{9.2}{11.2}\selectfont
\setlength{\tabcolsep}{3.6pt}
\renewcommand{\arraystretch}{1.12}

% Booktabs spacing
\setlength{\aboverulesep}{0.3ex}
\setlength{\belowrulesep}{0.3ex}
\setlength{\heavyrulewidth}{0.65pt}
\setlength{\lightrulewidth}{0.35pt}
\setlength{\cmidrulewidth}{0.35pt}

\begin{adjustbox}{max width=\linewidth}
\begin{tabular}{l*{8}{c}}
\toprule
& \multicolumn{7}{c}{\textbf{ALFWorld $\uparrow$}}
& {} \\
\cmidrule(lr){2-8}

\textbf{Method}
& \textbf{Pick}
& \textbf{Look}
& \textbf{Clean}
& \textbf{Heat}
& \textbf{Cool}
& \textbf{Pick2}
& \textbf{Overall}
& \shortstack{\textbf{Mean steps} $\downarrow$} \\
\midrule

% ================= Seen =================
\rowcolor{scopeband}
\multicolumn{9}{l}{
\rule{0pt}{2.5ex}\textit{Seen evaluation}
} \\

GRPO
& 96.8
& 75.0
& 89.5
& 90.0
& 80.8
& 70.0
& 85.2
& 22.0 \\

\rowcolor{scopeablationfull}
\textbf{SCOPE (Ours)}
& \textbf{100.0}
& \textbf{100.0}
& \textbf{100.0}
& \textbf{95.0}
& \textbf{100.0}
& \textbf{100.0}
& \textbf{99.2}
& \textbf{7.7} \\

\midrule

% ================= Unseen =================
\rowcolor{scopeband}
\multicolumn{9}{l}{
\rule{0pt}{2.5ex}\textit{Unseen evaluation}
} \\

GRPO
& \textbf{87.5}
& 83.3
& 77.4
& 78.3
& 85.7
& 76.5
& 81.3
& 17.4 \\

\rowcolor{scopeablationfull}
\textbf{SCOPE (Ours)}
& 83.3
& \textbf{100.0}
& \textbf{93.5}
& \textbf{91.3}
& \textbf{100.0}
& \textbf{100.0}
& \textbf{94.0}
& \textbf{9.2} \\

\bottomrule
\end{tabular}
\end{adjustbox}
\endgroup
\end{table}

\begin{figure}[t]
    \centering
    \includegraphics[width=\columnwidth]
    {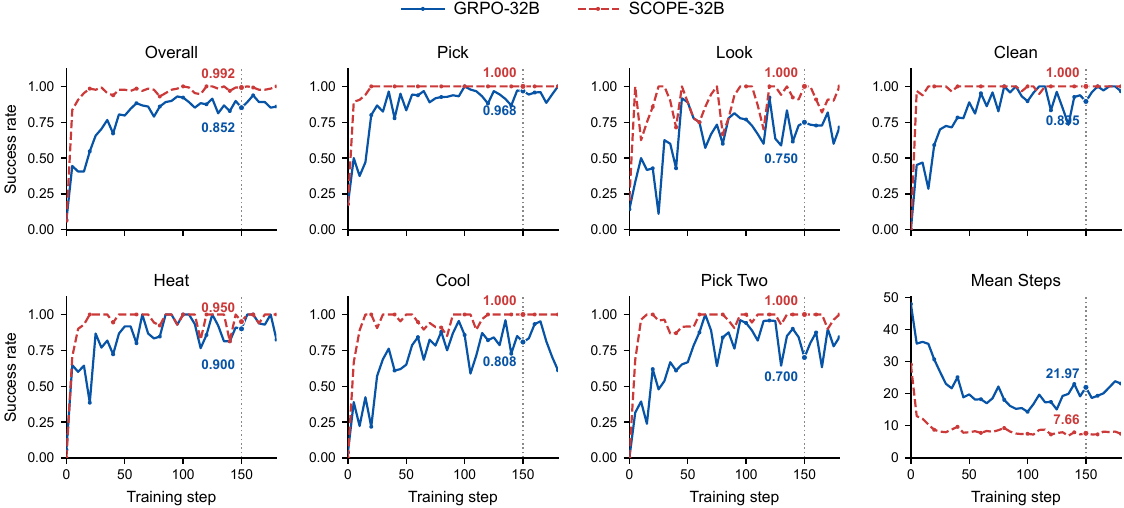}
    \caption{\textbf{Training dynamics with Qwen3-32B.}
    Single-run ALFWorld seen-evaluation curves
    comparing SCOPE and GRPO across six task
    families, overall success, and mean interaction
    steps. The dotted line marks step 150,
    corresponding to the seen results in
    Table~\ref{tab:scope_32b}.}
    \label{fig:scope_32b_curves}
\end{figure}

\paragraph{SCOPE remains effective at larger model scale}
At step 150, SCOPE achieves 99.2\% overall success
on Qwen3-32B, outperforming GRPO (85.2\%) by
14.0 percentage points. Improvements extend across
all six task families, with particularly large gains
on Pick2 (+30.0 points), Look (+25.0), and
Cool (+19.2). These results indicate that
SCOPE's targeted corrective supervision remains
effective when extended to a substantially
larger Qwen3 backbone.

\paragraph{Higher success with fewer interactions}
Figure~\ref{fig:scope_32b_curves} shows that
SCOPE improves rapidly during early training
and maintains near-ceiling success in later stages.
On seen tasks, average interactions decrease
from 22.0 to 7.7, a 65.0\% reduction.
The gains also extend to 134 unseen ALFWorld
episodes, where SCOPE achieves 94.0\% success
versus 81.3\% for GRPO (+12.7 points), while
reducing average interactions by 47.2\%.
Together, these results demonstrate that SCOPE
retains both performance and efficiency advantages
at larger model scale, including on unseen tasks.

% 32B-SCALING-END

\section{Conclusion}

In this work, we identify two limitations of trajectory-level
supervision for LLM agents: trajectory outcomes obscure
intermediate decision quality, while trajectory-level skills
can introduce off-target distillation signals.
To address these challenges, we propose \textbf{SCOPE},
which internalizes error localization and applies
self-generated, step-specific corrective guidance only
at selected error steps. Bounded signed distillation
controls gap-driven updates, while success-relative
pairwise rewards encourage efficient task completion.
Experiments on ALFWorld and WebShop across three backbones
demonstrate consistent improvements over outcome-based RL
and strong performance against skill-guided methods.
On unseen ALFWorld tasks, SCOPE achieves 89.6\% success,
outperforming GRPO by 23.2 percentage points while
reducing reported interactions by 46.7\%.
These results highlight the importance of aligning
corrective supervision with the decisions that require it,
without additional guidance at inference time.

\FloatBarrier
\begingroup
\setlength{\parskip}{0pt}
\setlength{\bibsep}{0pt plus 0.5pt}
\bibliographystyle{abbrvnat}
\bibliography{arxiv}

@article{yao2022react,
  title={React: Synergizing reasoning and acting in language models},
  author={Yao, Shunyu and Zhao, Jeffrey and Yu, Dian and Du, Nan and Shafran, Izhak and Narasimhan, Karthik and Cao, Yuan},
  journal={arXiv preprint arXiv:2210.03629},
  year={2022}
}

@article{schick2023toolformer,
  title={Toolformer: Language models can teach themselves to use tools},
  author={Schick, Timo and Dwivedi-Yu, Jane and Dess{\`\i}, Roberto and Raileanu, Roberta and Lomeli, Maria and Hambro, Eric and Zettlemoyer, Luke and Cancedda, Nicola and Scialom, Thomas},
  journal={Advances in neural information processing systems},
  volume={36},
  pages={68539--68551},
  year={2023}
}

@inproceedings{liu2024agentbench,
  title={Agentbench: Evaluating llms as agents},
  author={Liu, Xiao and Yu, Hao and Zhang, Hanchen and Xu, Yifan and Lei, Xuanyu and Lai, Hanyu and Gu, Yu and Ding, Hangliang and Men, Kaiwen and Yang, Kejuan and others},
  booktitle={International Conference on Learning Representations},
  volume={2024},
  pages={52989--53046},
  year={2024}
}

@article{qian2026toolrl,
  title={Toolrl: Reward is all tool learning needs},
  author={Qian, Cheng and Acikgoz, Emre Can and He, Qi and Wang, Hongru and Chen, Xiusi and Hakkani-Tur, Dilek and Tur, Gokhan and Ji, Heng},
  journal={Advances in Neural Information Processing Systems},
  volume={38},
  pages={105523--105553},
  year={2026}
}

@article{wang2025ragen,
  title={Ragen: Understanding self-evolution in llm agents via multi-turn reinforcement learning},
  author={Wang, Zihan and Wang, Kangrui and Wang, Qineng and Zhang, Pingyue and Li, Linjie and Yang, Zhengyuan and Jin, Xing and Yu, Kefan and Nguyen, Minh Nhat and Liu, Licheng and others},
  journal={arXiv preprint arXiv:2504.20073},
  year={2025}
}

@article{jin2025search,
  title={Search-r1: Training llms to reason and leverage search engines with reinforcement learning},
  author={Jin, Bowen and Zeng, Hansi and Yue, Zhenrui and Yoon, Jinsung and Arik, Sercan and Wang, Dong and Zamani, Hamed and Han, Jiawei},
  journal={arXiv preprint arXiv:2503.09516},
  year={2025}
}

@article{shao2024deepseekmath,
  title={Deepseekmath: Pushing the limits of mathematical reasoning in open language models},
  author={Shao, Zhihong and Wang, Peiyi and Zhu, Qihao and Xu, Runxin and Song, Junxiao and Bi, Xiao and Zhang, Haowei and Zhang, Mingchuan and Li, YK and Wu, Yang and others},
  journal={arXiv preprint arXiv:2402.03300},
  year={2024}
}

@article{luo2025agent,
  title={Agent lightning: Train any ai agents with reinforcement learning},
  author={Luo, Xufang and Zhang, Yuge and He, Zhiyuan and Wang, Zilong and Zhao, Siyun and Li, Dongsheng and Qiu, Luna K and Yang, Yuqing},
  journal={arXiv preprint arXiv:2508.03680},
  year={2025}
}

@inproceedings{wu2026spark,
  title={Spark: Strategic policy-aware exploration via dynamic branching for long-horizon agentic learning},
  author={Wu, Jinyang and Yang, Shuo and Shen, Yuhao and Zhang, Shuai and Wen, Zhengqi and Tao, Jianhua},
  booktitle={Proceedings of the 64th Annual Meeting of the Association for Computational Linguistics (Volume 1: Long Papers)},
  pages={23981--24004},
  year={2026}
}

@article{feng2026group,
  title={Group-in-group policy optimization for llm agent training},
  author={Feng, Lang and Xue, Zhenghai and Liu, Tingcong and An, Bo},
  journal={Advances in Neural Information Processing Systems},
  volume={38},
  pages={46375--46408},
  year={2026}
}

@article{cheng2026beyond,
  title={Beyond Trajectory-Level Attribution: Graph-Based Credit Assignment for Agentic Reinforcement Learning},
  author={Cheng, Xin and He, Shuo and Feng, Lang and Xu, HaiYang and Yan, Ming and Feng, Lei and An, Bo},
  journal={arXiv preprint arXiv:2605.26684},
  year={2026}
}

@article{liu2026deshortcut,
  title={DeShortcut-Align: Decoupling Spurious Shortcuts for Robust Safety Alignment in Large Reasoning Models},
  author={Liu, Qirui and Sun, Yichen and Wang, Yan and Chu, Zhixuan and Jiang, Linbo and Lin, Jianan and Ren, Kui},
  journal={arXiv preprint arXiv:2609.34896},
  year={2026}
}

@article{zhao2026self,
  title={Self-distilled reasoner: On-policy self-distillation for large language models},
  author={Zhao, Siyan and Xie, Zhihui and Liu, Mengchen and Huang, Jing and Pang, Guan and Chen, Feiyu and Grover, Aditya},
  journal={arXiv preprint arXiv:2601.18734},
  year={2026}
}

@inproceedings{agarwal2024policy,
  title={On-policy distillation of language models: Learning from self-generated mistakes},
  author={Agarwal, Rishabh and Vieillard, Nino and Zhou, Yongchao and Stanczyk, Piotr and Ramos Garea, Sabela and Geist, Matthieu and Bachem, Olivier},
  booktitle={International Conference on Learning Representations},
  volume={2024},
  pages={21246--21263},
  year={2024}
}

@article{liu2026eopsa,
  title={EOPSA: Efficient On-Policy Self-Distilled Safety Alignment},
  author={Liu, Qirui and Sun, Yichen and Wang, Yan and Mi, Yu and Cao, Wei and Shen, Yue and Chu, Zhixuan and Ren, Kui},
  journal={arXiv preprint arXiv:2609.34519},
  year={2026}
}

@article{andrychowicz2017hindsight,
  title={Hindsight experience replay},
  author={Andrychowicz, Marcin and Wolski, Filip and Ray, Alex and Schneider, Jonas and Fong, Rachel and Welinder, Peter and McGrew, Bob and Tobin, Josh and Pieter Abbeel, OpenAI and Zaremba, Wojciech},
  journal={Advances in neural information processing systems},
  volume={30},
  year={2017}
}

@article{wang2026skill,
  title={Skill-sd: Skill-conditioned self-distillation for multi-turn llm agents},
  author={Wang, Hao and Wang, Guozhi and Xiao, Han and Zhou, Yufeng and Pan, Yue and Wang, Jichao and Xu, Ke and Wen, Yafei and Ruan, Xiaohu and Chen, Xiaoxin and others},
  journal={arXiv preprint arXiv:2604.10674},
  year={2026}
}

@article{wu2026seed,
  title={SEED: Self-Evolving On-Policy Distillation for Agentic Reinforcement Learning},
  author={Wu, Jinyang and Yang, Shuo and Lu, Zhengxi and Zhang, Fan and Shen, Yuhao and Feng, Lang and Luo, Haoran and Lian, Zheng and Zhang, Shuai and Wen, Zhengqi and others},
  journal={arXiv preprint arXiv:2607.14777},
  year={2026}
}

@inproceedings{wu2026atlas,
  title={Atlas: Orchestrating heterogeneous models and tools for multi-domain complex reasoning},
  author={Wu, Jinyang and Zhai, Guocheng and Jin, Ruihan and Yuan, Jiahao and Shen, Yuhao and Zhang, Shuai and Wen, Zhengqi and Tao, Jianhua},
  booktitle={Findings of the Association for Computational Linguistics: ACL 2026},
  pages={17503--17535},
  year={2026}
}

@article{luo2025large,
  title={Large language model agent: A survey on methodology, applications and challenges},
  author={Luo, Junyu and Zhang, Weizhi and Yuan, Ye and Zhao, Yusheng and Yang, Junwei and Gu, Yiyang and Wu, Bohan and Chen, Binqi and Qiao, Ziyue and Long, Qingqing and others},
  journal={arXiv preprint arXiv:2503.21460},
  year={2025}
}

@article{yan2026odysseyarena,
  title={Odysseyarena: Benchmarking large language models for long-horizon, active and inductive interactions},
  author={Yan, Hang and Xu, Fangzhi and Sun, Qiushi and Wu, Jinyang and Huang, Zixian and Huang, Muye and Gong, Jingyang and Ding, Zichen and Cheng, Kanzhi and Wang, Yian and others},
  journal={arXiv preprint arXiv:2602.05843},
  year={2026}
}

@article{shinn2023reflexion,
  title={Reflexion: Language agents with verbal reinforcement learning},
  author={Shinn, Noah and Cassano, Federico and Gopinath, Ashwin and Narasimhan, Karthik and Yao, Shunyu},
  journal={Advances in neural information processing systems},
  volume={36},
  pages={8634--8652},
  year={2023}
}

@article{wang2023voyager,
  title={Voyager: An open-ended embodied agent with large language models},
  author={Wang, Guanzhi and Xie, Yuqi and Jiang, Yunfan and Mandlekar, Ajay and Xiao, Chaowei and Zhu, Yuke and Fan, Linxi and Anandkumar, Anima},
  journal={arXiv preprint arXiv:2305.16291},
  year={2023}
}

@inproceedings{zhao2024expel,
  title={Expel: Llm agents are experiential learners},
  author={Zhao, Andrew and Huang, Daniel and Xu, Quentin and Lin, Matthieu and Liu, Yong-Jin and Huang, Gao},
  booktitle={Proceedings of the AAAI Conference on Artificial Intelligence},
  volume={38},
  number={17},
  pages={19632--19642},
  year={2024}
}

@article{madaan2023self,
  title={Self-refine: Iterative refinement with self-feedback},
  author={Madaan, Aman and Tandon, Niket and Gupta, Prakhar and Hallinan, Skyler and Gao, Luyu and Wiegreffe, Sarah and Alon, Uri and Dziri, Nouha and Prabhumoye, Shrimai and Yang, Yiming and others},
  journal={Advances in neural information processing systems},
  volume={36},
  pages={46534--46594},
  year={2023}
}

@article{yang2026opid,
  title={Opid: On-policy skill distillation for agentic reinforcement learning},
  author={Yang, Shuo and Wu, Jinyang and Lu, Zhengxi and Shen, Yuhao and Zhang, Fan and Feng, Lang and Zhang, Shuai and Luo, Haoran and Lian, Zheng and Wen, Zhengqi and others},
  journal={arXiv preprint arXiv:2606.26790},
  year={2026}
}

@article{wang2026agentopsd,
  title={Agentopsd: Recursive self-distillation for agentic reinforcement learning},
  author={Wang, Zi-Han and Lu, Zhengxi and Yao, Zhiyuan and Wu, Jinyang and Wu, Jie and Cai, Zhengzhou and Sun, Yueqing and Ye, Ziang and Hao, Linji and Gu, Qi and others},
  journal={arXiv preprint arXiv:2608.05987},
  year={2026}
}

@article{lu2026self,
  title={Self-distilled agentic reinforcement learning},
  author={Lu, Zhengxi and Yao, Zhiyuan and Han, Zhuowen and Wang, Zi-Han and Wu, Jinyang and Gu, Qi and Cai, Xunliang and Lu, Weiming and Xiao, Jun and Zhuang, Yueting and others},
  journal={arXiv preprint arXiv:2605.15155},
  year={2026}
}

@article{zheng2026group,
  title={Group-reflective self-distillation for agentic reinforcement learning},
  author={Zheng, Binbin and Xie, Zijun and Zhao, Guanqun and Gong, Enlei and Ma, Xing and Fu, Xiaoliang and Chen, Zeyu},
  journal={arXiv preprint arXiv:2607.28076},
  year={2026}
}

@article{hinton2015distilling,
  title={Distilling the knowledge in a neural network},
  author={Hinton, Geoffrey and Vinyals, Oriol and Dean, Jeff},
  journal={arXiv preprint arXiv:1503.02531},
  year={2015}
}

@inproceedings{kim2016sequence,
  title={Sequence-level knowledge distillation},
  author={Kim, Yoon and Rush, Alexander M},
  booktitle={Proceedings of the 2016 conference on empirical methods in natural language processing},
  pages={1317--1327},
  year={2016}
}

@inproceedings{ross2011reduction,
  title={A reduction of imitation learning and structured prediction to no-regret online learning},
  author={Ross, St{\'e}phane and Gordon, Geoffrey and Bagnell, Drew},
  booktitle={Proceedings of the fourteenth international conference on artificial intelligence and statistics},
  pages={627--635},
  year={2011},
  organization={JMLR Workshop and Conference Proceedings}
}

@article{yang2026self,
  title={Self-distilled rlvr},
  author={Yang, Chenxu and Qin, Chuanyu and Si, Qingyi and Chen, Minghui and Gu, Naibin and Yao, Dingyu and Lin, Zheng and Wang, Weiping and Wang, Jiaqi and Duan, Nan},
  journal={arXiv preprint arXiv:2604.03128},
  year={2026}
}

@article{zhong2026sod,
  title={Sod: Step-wise on-policy distillation for small language model agents},
  author={Zhong, Qiyong and Zheng, Mao and Song, Mingyang and Lin, Xin and Sun, Jie and Jiang, Houcheng and Wang, Xiang and Fang, Junfeng},
  journal={arXiv preprint arXiv:2605.07725},
  year={2026}
}

@misc{zai2026glm52,
  author = {{Z.ai}},
  title  = {{GLM-5.2}: Built for Long-Horizon Tasks},
  year   = {2026},
  month  = jun,
  url    = {https://z.ai/blog/glm-5.2},
  note   = {Accessed: 2026-10-23}
}

@article{aggarwal2025l1,
  title={L1: Controlling how long a reasoning model thinks with reinforcement learning},
  author={Aggarwal, Pranjal and Welleck, Sean},
  journal={arXiv preprint arXiv:2503.04697},
  year={2025}
}

@article{yi2026shorterbetter,
  title={Shorterbetter: Guiding reasoning models to find optimal inference length for efficient reasoning},
  author={Yi, Jingyang and Wang, Jiazheng and Li, Sida},
  journal={Advances in Neural Information Processing Systems},
  volume={38},
  pages={39011--39043},
  year={2026}
}

@article{bradley1952rank,
  title={Rank analysis of incomplete block designs: I. the method of paired comparisons},
  author={Bradley, Ralph Allan and Terry, Milton E},
  journal={Biometrika},
  volume={39},
  number={3/4},
  pages={324--345},
  year={1952},
  publisher={JSTOR}
}

@article{schulman2017proximal,
  title={Proximal policy optimization algorithms},
  author={Schulman, John and Wolski, Filip and Dhariwal, Prafulla and Radford, Alec and Klimov, Oleg},
  journal={arXiv preprint arXiv:1707.06347},
  year={2017}
}

@article{shridhar2020alfworld,
  title={Alfworld: Aligning text and embodied environments for interactive learning},
  author={Shridhar, Mohit and Yuan, Xingdi and C{\^o}t{\'e}, Marc-Alexandre and Bisk, Yonatan and Trischler, Adam and Hausknecht, Matthew},
  journal={arXiv preprint arXiv:2010.03768},
  year={2020}
}

@article{yao2022webshop,
  title={Webshop: Towards scalable real-world web interaction with grounded language agents},
  author={Yao, Shunyu and Chen, Howard and Yang, John and Narasimhan, Karthik},
  journal={Advances in Neural Information Processing Systems},
  volume={35},
  pages={20744--20757},
  year={2022}
}

@misc{hui2024qwen2,
      title={Qwen2.5 Technical Report}, 
      author={Qwen and : and An Yang and Baosong Yang and Beichen Zhang and Binyuan Hui and Bo Zheng and Bowen Yu and Chengyuan Li and Dayiheng Liu and Fei Huang and Haoran Wei and Huan Lin and Jian Yang and Jianhong Tu and Jianwei Zhang and Jianxin Yang and Jiaxi Yang and Jingren Zhou and Junyang Lin and Kai Dang and Keming Lu and Keqin Bao and Kexin Yang and Le Yu and Mei Li and Mingfeng Xue and Pei Zhang and Qin Zhu and Rui Men and Runji Lin and Tianhao Li and Tianyi Tang and Tingyu Xia and Xingzhang Ren and Xuancheng Ren and Yang Fan and Yang Su and Yichang Zhang and Yu Wan and Yuqiong Liu and Zeyu Cui and Zhenru Zhang and Zihan Qiu},
      year={2025},
      eprint={2412.15115},
      archivePrefix={arXiv},
      primaryClass={cs.CL},
      url={https://arxiv.org/abs/2412.15115}, 
}

@article{yang2025qwen3,
  title={Qwen3 technical report},
  author={Yang, An and Li, Anfeng and Yang, Baosong and Zhang, Beichen and Hui, Binyuan and Zheng, Bo and Yu, Bowen and Gao, Chang and Huang, Chengen and Lv, Chenxu and others},
  journal={arXiv preprint arXiv:2505.09388},
  year={2025}
}

@article{xia2026skillrl,
  title={Skillrl: Evolving agents via recursive skill-augmented reinforcement learning},
  author={Xia, Peng and Chen, Jianwen and Wang, Hanyang and Liu, Jiaqi and Zeng, Kaide and Wang, Yu and Han, Siwei and Zhou, Yiyang and Zhao, Xujiang and Chen, Haifeng and others},
  journal={arXiv preprint arXiv:2602.08234},
  year={2026}
}
\endgroup

\clearpage
\appendix
% ============================================================
% Appendix Overview
% Required packages:
% \usepackage[table]{xcolor}
% \usepackage{tabularx}
% \usepackage{booktabs}
% \usepackage{array}
% \usepackage[most]{tcolorbox}
% ============================================================

\definecolor{appblue}{HTML}{315C9B}
\definecolor{apptext}{HTML}{20242B}
\definecolor{appgray}{HTML}{667085}
\definecolor{appline}{HTML}{D9DEE6}
\definecolor{appfill}{HTML}{F4F6F9}

\newcolumntype{Y}{>{\raggedright\arraybackslash}X}

% Main appendix entry
\newcommand{\AppEntry}[3]{%
    \rowcolor{appfill}
    \textcolor{appblue}{\bfseries #1} &
    \textcolor{apptext}{\bfseries #2} &
    \textcolor{appblue}{\bfseries\pageref{#3}} \\[-1pt]
}

% Subsection entry
\newcommand{\AppSubEntry}[3]{%
    \textcolor{appgray}{\hspace{0.9em}\scriptsize\bfseries #1} &
    \textcolor{apptext}{#2} &
    \textcolor{appgray}{\pageref{#3}} \\
}

\newcommand{\AppGap}{%
    \addlinespace[2.5pt]
}

% ============================================================
% Appendix Overview
% ============================================================

\noindent{\LARGE\bfseries\color{sectionblue} Appendix Overview}

\vspace{0.25em}

{\color{appblue}\rule{\textwidth}{1.05pt}}

\vspace{0.4em}

\begin{tcolorbox}[
    enhanced,
    colback=white,
    colframe=appline,
    boxrule=0.5pt,
    arc=1.2pt,
    left=5pt,
    right=5pt,
    top=4pt,
    bottom=4pt,
    boxsep=0pt
]

\small
\renewcommand{\arraystretch}{1.08}
\setlength{\tabcolsep}{5pt}

\begin{tabularx}{\linewidth}{
    @{}
    >{\raggedright\arraybackslash}p{0.17\linewidth}
    Y
    >{\raggedleft\arraybackslash}p{0.055\linewidth}
    @{}
}

% ---------- Header ----------
\textcolor{appgray}{\scriptsize\bfseries SECTION} &
\textcolor{appgray}{\scriptsize\bfseries CONTENTS} &
\textcolor{appgray}{\scriptsize\bfseries PAGE}
\\[-2pt]

\arrayrulecolor{appline}
\midrule
\addlinespace[2pt]

% ============================================================
% Appendix A
\AppEntry{Appendix A}{Theoretical Analysis}{app:theory}
\AppSubEntry{A.1}{Bounded Signed Distillation}{app:bounded_distillation}
\AppSubEntry{A.2}{Influence of Irrelevant Selected Tokens}{app:selection_influence}
\AppSubEntry{A.3}{Properties of Success-Relative Efficiency Rewards}{app:pairwise_properties}
\AppGap

% Appendix B
\AppEntry{Appendix B}{Experimental Details}{app:experimental_details}
\AppSubEntry{B.1}{Training Configuration}{app:training_configuration}
\AppSubEntry{B.2}{Validation Curves Across Backbones}{app:validation_curves}
\AppSubEntry{B.3}{Supervised Data and Hindsight Analysis}{app:sft_preparation}
\AppSubEntry{B.4}{Error-Localization Evaluation Metrics}{app:error_localization_metrics}
\AppSubEntry{B.5}{Motivation Diagnostics}{app:diagnostic_details}
\AppGap

% Appendix C
\AppEntry{Appendix C}{SFT and Analyzer Prompt Templates}{app:analysis_prompts}
\AppSubEntry{C.1}{Outcome-Specific User Template}{app:analysis_prompts:user_template}

\end{tabularx}
\end{tcolorbox}

\section{Theoretical Analysis}
\label{app:theory}

We analyze three properties of SCOPE: bounded signed updates,
the influence of irrelevant distillation tokens, and the
success-relative length reward. The distillation results hold for a
fixed training microbatch and characterize its auxiliary update,
not the convergence of the full RL objective.

\subsection{Bounded Signed Distillation}
\label{app:bounded_distillation}

Let $\mathcal I=\mathcal T_{\mathrm{sel}}$ be the selected token set,
with $n=|\mathcal I|>0$. For each token $k\in\mathcal I$, define
\begin{equation}
\label{eq:theory_notation}
\begin{aligned}
 r_k(\theta)&=\ell_k^T-\ell_k^S(\theta),\\
 v_k(\theta)&=\nabla_\theta\ell_k^S(\theta),\\
 f(r)&=c\tanh(\alpha r/c),\quad \alpha,c>0.
\end{aligned}
\end{equation}
We hold the sampled tokens, selected steps, and teacher scores fixed.
As in Equation~\ref{eq:dist_advantage}, gradients do not pass through
$f(r_k)$. The resulting auxiliary gradient is
\begin{equation}
\label{eq:theory_gradient}
 g(\theta)=-\frac1n\sum_{k\in\mathcal I}f(r_k(\theta))v_k(\theta).
\end{equation}
For $n=0$, we define the auxiliary loss and gradient to be zero.

\begin{scopeproposition}[Log-cosh gradient equivalence]
\label{prop:logcosh}
Define
\begin{equation}
\label{eq:theory_potential}
\begin{aligned}
 \varphi(r)&=\frac{c^2}{\alpha}\log\cosh(\alpha r/c),\\
 \Phi(\theta)&=\frac1n\sum_{k\in\mathcal I}\varphi(r_k(\theta)).
\end{aligned}
\end{equation}
Then $\nabla_\theta\Phi(\theta)=g(\theta)$. The scalar potential
$\varphi$ is nonnegative, symmetric, and strictly convex, with its
unique minimum at $r=0$. Moreover,
\begin{equation}
\label{eq:theory_potential_limits}
\begin{aligned}
 \varphi(r)&=\frac{\alpha}{2}r^2+O(r^4), &&r\to0,\\
 \varphi(r)&=c|r|-\frac{c^2}{\alpha}\log2+o(1),
 &&|r|\to\infty.
\end{aligned}
\end{equation}
\end{scopeproposition}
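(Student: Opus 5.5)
The plan is direct calculus on the scalar potential, followed by the chain rule through $r_k(\theta)$. First compute $\varphi'(r)$. Since $\frac{d}{dx}\log\cosh x=\tanh x$, we get
\[
\varphi'(r)=\frac{c^2}{\alpha}\cdot\frac{\alpha}{c}\tanh(\alpha r/c)=c\tanh(\alpha r/c)=f(r).
\]
The teacher score $\ell_k^T$ is held fixed, so $\nabla_\theta r_k(\theta)=-\nabla_\theta\ell_k^S(\theta)=-v_k(\theta)$. The chain rule then gives $\nabla_\theta\varphi(r_k(\theta))=-f(r_k(\theta))v_k(\theta)$. Averaging over $k\in\mathcal I$ yields $\nabla_\theta\Phi=g$, matching Equation~\ref{eq:theory_gradient}. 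Here the stop-gradient convention is used in one way only: $f(r_k)$ plays the role of the coefficient multiplying $v_k$, which is exactly the derivative of the potential. The equality is therefore an identity of gradients at each $\theta$, not a claim that the surrogate loss itself equals $\Phi$.

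Next come the shape properties. Nonnegativity follows from $\cosh x\ge 1$, so $\log\cosh x\ge 0$, with equality iff $x=0$. This also gives the unique minimizer $r=0$. Symmetry follows because $\cosh$ is even. For strict convexity, compute
\[
\varphi''(r)=\alpha\,\mathrm{sech}^2(\alpha r/c)>0 \quad\text{for all } r.
\]

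For the expansion near zero, use $\log\cosh x=\tfrac{x^2}{2}-\tfrac{x^4}{12}+O(x^6)$ with $x=\alpha r/c$. This gives
\[
\varphi(r)=\frac{c^2}{\alpha}\Bigl(\frac{\alpha^2r^2}{2c^2}+O(r^4)\Bigr)=\frac{\alpha}{2}r^2+O(r^4).
\]

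The large-$|r|$ asymptotic is the only step needing care. I would write
\[
\log\cosh x=|x|-\log 2+\log\bigl(1+e^{-2|x|}\bigr).
\]
The last term is $o(1)$ as $|x|\to\infty$; in fact it is $O(e^{-2|x|})$. Substituting $x=\alpha r/c$ and multiplying by $c^2/\alpha$ gives
\[
\varphi(r)=c|r|-\frac{c^2}{\alpha}\log 2+o(1).
\]
The $o(1)$ survives the multiplication because $c^2/\alpha$ is a fixed constant.

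No step is a real obstacle. The main thing to state carefully is the precise sense of $\nabla_\theta\Phi=g$ under the stop-gradient and fixed-data assumptions, and that the $n=0$ case is trivial by convention.
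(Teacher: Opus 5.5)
Your proposal is correct and follows essentially the same route as the paper. Both compute $\varphi'=f$ and $\varphi''=\alpha/\cosh^2(\alpha r/c)>0$, use $\nabla_\theta r_k=-v_k$ with the teacher score fixed, and obtain nonnegativity, symmetry, and the two asymptotics from standard properties of $\log\cosh$; your identity $\log\cosh x=|x|-\log 2+\log(1+e^{-2|x|})$ simply spells out the large-$|r|$ expansion that the paper cites directly.
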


\begin{proof}
Differentiation gives
\begin{equation}
\label{eq:theory_potential_derivatives}
 \varphi'(r)=f(r),\qquad
 \varphi''(r)=\frac{\alpha}{\cosh^2(\alpha r/c)}>0.
\end{equation}
Since $\ell_k^T$ is fixed, $\nabla_\theta r_k=-v_k$.
Thus $\nabla_\theta\Phi=-\frac1n\sum_k f(r_k)v_k=g$.
Nonnegativity and symmetry follow from $\cosh x\geq1$ and
$\cosh(-x)=\cosh x$. Finally, the expansions
$\log\cosh x=x^2/2+O(x^4)$ near zero and
$\log\cosh x=|x|-\log2+o(1)$ at infinity give
Equation~\ref{eq:theory_potential_limits}.
\end{proof}

This result interprets the detached distillation update as a
log-cosh gradient: small gaps receive approximately linear signed
coefficients, while large gaps have bounded influence.

\begin{scopeproposition}[Bounded auxiliary gradient]
\label{prop:bounded_gradient}
For any finite $r$,
\begin{equation}
\label{eq:theory_coefficient_bounds}
\begin{aligned}
 |f(r)|&<c,\qquad f(0)=0,\\
 \operatorname{sign}f(r)&=\operatorname{sign}r,\\
 0<f'(r)&\leq\alpha.
\end{aligned}
\end{equation}
If $\|v_k\|\leq B_k$ for each selected token, then
\begin{equation}
\label{eq:theory_gradient_bound}
 \|g\|\leq\frac cn\sum_{k\in\mathcal I}B_k.
\end{equation}
In particular, $\|g\|\leq cB$ when $B_k\leq B$ for all $k$.
\end{scopeproposition}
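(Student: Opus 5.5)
The plan is to read off every scalar property of $f(r)=c\tanh(\alpha r/c)$ from elementary facts about $\tanh$, and then bound $\|g\|$ with the triangle inequality applied to Equation~\ref{eq:theory_gradient}.

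\textbf{Scalar properties.} For the magnitude bound, $|\tanh x|<1$ for every finite real $x$. Taking $x=\alpha r/c$ and multiplying by $c>0$ gives $|f(r)|<c$. Next, $f(0)=c\tanh 0=0$.

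For the sign, $\tanh$ is odd and strictly increasing, so $\tanh x$ has the sign of $x$. Since $\alpha/c>0$ and $c>0$, $f(r)$ has the sign of $r$, which gives $\operatorname{sign}f(r)=\operatorname{sign}r$, including the case $r=0$.

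For the slope bound, the chain rule gives $f'(r)=\alpha\,\mathrm{sech}^2(\alpha r/c)$; this matches $\varphi''$ in the proof of Proposition~\ref{prop:logcosh}. Because $\cosh y\geq 1$ and $\cosh y$ is finite for every real $y$, we have $0<\mathrm{sech}^2 y\leq1$, hence $0<f'(r)\leq\alpha$, with equality exactly at $r=0$.

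\textbf{Gradient bound.} The coefficients $f(r_k)$ are stop-gradient scalars, so $g$ is simply a weighted sum of the vectors $v_k$. The triangle inequality and homogeneity of the norm give
\[
\|g\|\leq\frac1n\sum_{k\in\mathcal I}|f(r_k)|\,\|v_k\|\leq\frac1n\sum_{k\in\mathcal I}c\,B_k .
\]
This uses $|f(r_k)|<c$ from the first part, which requires each $r_k$ to be finite. That holds whenever the teacher and student log-probabilities are finite, i.e.\ whenever the sampled tokens have positive probability. Under the uniform bound $B_k\leq B$, the sum is at most $nB$, so $\|g\|\leq cB$. The degenerate case $n=0$ is covered by the convention $g=0$.

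\textbf{Remarks on difficulty.} No step is genuinely hard. The only point needing care is finiteness: the strict bound $|f|<c$ fails at $r=\pm\infty$. The gradient estimate only needs the non-strict bound $|f(r)|\leq c$, so the argument still goes through if some $r_k$ degenerates. One should also note explicitly that the bound does not depend on the size of the raw gaps. This independence is what separates it from the unbounded raw-gap update, where the coefficient $r_k$ itself multiplies $v_k$.
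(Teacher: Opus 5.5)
Your proposal is correct and follows essentially the same route as the paper: the coefficient properties come from the range, oddness, and strict monotonicity of $\tanh$ together with $f'(r)=\alpha/\cosh^2(\alpha r/c)$, and the gradient bound is the triangle inequality applied to Equation~\ref{eq:theory_gradient} using $|f(r_k)|<c$ and $\|v_k\|\leq B_k$. Your remarks on finiteness of $r_k$ and the $n=0$ convention are harmless additions that the paper leaves implicit.
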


\begin{proof}
The coefficient bounds follow from the range and monotonicity
of $\tanh$, together with
$f'(r)=\alpha/\cosh^2(\alpha r/c)$.
By the triangle inequality,
\[
 \|g\|\leq\frac1n\sum_{k\in\mathcal I}|f(r_k)|\|v_k\|
 \leq\frac cn\sum_{k\in\mathcal I}B_k.
\]
\end{proof}

\begin{scopecorollary}[Sensitivity to teacher scores]
\label{cor:teacher_perturbation}
Suppose the cached teacher score for token $k$ changes by $e_k$,
while the student and selected set remain fixed. If $\|v_k\|\leq B$,
the resulting gradient $\widetilde g$ satisfies
\begin{equation}
\label{eq:theory_teacher_perturbation}
 \|\widetilde g-g\|
 \leq\frac Bn\sum_{k\in\mathcal I}
 \min\{\alpha|e_k|,2c\}.
\end{equation}
\end{scopecorollary}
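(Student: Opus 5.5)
The plan is to write the perturbed gradient explicitly and reduce the claim to a scalar inequality for $f$, applied token by token. If the cached teacher score for token $k$ changes from $\ell_k^T$ to $\ell_k^T+e_k$, the gap becomes $\widetilde r_k=r_k+e_k$. Because the student parameters and the selected set $\mathcal I$ are held fixed, the score gradients $v_k(\theta)=\nabla_\theta\ell_k^S(\theta)$ are unchanged. From Equation~\ref{eq:theory_gradient},
\[
 \widetilde g-g=-\frac1n\sum_{k\in\mathcal I}\bigl(f(r_k+e_k)-f(r_k)\bigr)v_k .
\]
The triangle inequality together with $\|v_k\|\leq B$ then gives $\|\widetilde g-g\|\leq\frac Bn\sum_{k}|f(r_k+e_k)-f(r_k)|$. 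It therefore suffices to prove, for every real $r$ and $e$,
\[
 |f(r+e)-f(r)|\leq\min\{\alpha|e|,2c\}.
\]

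We bound the two terms of the minimum separately, using Proposition~\ref{prop:bounded_gradient}.

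\emph{Lipschitz bound.} Since $0<f'\leq\alpha$ everywhere, the mean value theorem gives $|f(r+e)-f(r)|\leq\alpha|e|$.

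\emph{Range bound.} Since $|f|<c$ for every finite argument, the difference of two values of $f$ has magnitude strictly less than $2c$.

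Taking the smaller of the two bounds for each token and summing over $\mathcal I$ yields Equation~\ref{eq:theory_teacher_perturbation}.

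No step poses a real obstacle; the argument is a direct combination of the coefficient bounds already established. The only point that needs care is justifying that $v_k$ is identical in $g$ and $\widetilde g$. This holds because teacher scores enter only through the stop-gradient coefficient $f(r_k)$, and the student, the sampled tokens and $\mathcal I$ are fixed by hypothesis. As a result, the perturbation affects only the scalar weights and never the score directions. The $2c$ cap then shows that even an arbitrarily corrupted teacher score moves the auxiliary gradient by at most $2cB/n$ per token.
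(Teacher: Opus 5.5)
Your proposal is correct and follows the paper's proof: both derive the scalar bound $|f(r+e)-f(r)|\leq\min\{\alpha|e|,2c\}$ from $|f'|\leq\alpha$ and $|f|<c$. Both then apply the triangle inequality to the expression for $g$ with the score gradients $v_k$ unchanged. You spell out the mean-value-theorem step and why $v_k$ is unaffected, which the paper leaves implicit.
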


\begin{proof}
Since $|f'(r)|\leq\alpha$ and $|f(r)|<c$,
$|f(r+e)-f(r)|\leq\min\{\alpha|e|,2c\}$.
Substituting this bound into Equation~\ref{eq:theory_gradient}
and applying the triangle inequality proves the claim.
\end{proof}

Together, these bounds show that $\alpha$ controls local
sensitivity and $c$ limits gap-dependent amplification. They
assume bounded token score gradients and do not imply a global
stability guarantee for RL training.

\subsection{Influence of Irrelevant Selected Tokens}
\label{app:selection_influence}

Step selection removes unselected tokens from the auxiliary loss.
To examine the effect of imperfect selection, partition
$\mathcal I=\mathcal I_{\mathrm{rel}}\cup\mathcal I_{\mathrm{off}}$
into relevant and irrelevant selected tokens, and let
$\eta_{\mathrm{off}}=|\mathcal I_{\mathrm{off}}|/n$.

\begin{scopeproposition}[Off-target gradient contribution]
\label{prop:selection_influence}
Assume $\|v_k\|\leq B$. The gradient contribution from irrelevant
tokens,
\[
 g_{\mathrm{off}}=-\frac1n\sum_{k\in\mathcal I_{\mathrm{off}}}
 f(r_k)v_k,
\]
satisfies
\begin{equation}
\label{eq:theory_offtarget_bound}
 \|g_{\mathrm{off}}\|\leq cB\eta_{\mathrm{off}}.
\end{equation}
If $\mathcal I_{\mathrm{rel}}\neq\varnothing$, let
$\overline g_{\mathrm{rel}}$ be the mean auxiliary gradient over
relevant tokens alone. Then
\begin{equation}
\label{eq:theory_selection_deviation}
 \|g-\overline g_{\mathrm{rel}}\|
 \leq2cB\eta_{\mathrm{off}}.
\end{equation}
\end{scopeproposition}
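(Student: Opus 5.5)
The first bound follows directly from Proposition~\ref{prop:bounded_gradient}. By the triangle inequality together with $|f(r_k)|<c$ and $\|v_k\|\leq B$, we get
\[
\|g_{\mathrm{off}}\|\leq\frac1n\sum_{k\in\mathcal I_{\mathrm{off}}}|f(r_k)|\|v_k\|\leq\frac{cB|\mathcal I_{\mathrm{off}}|}{n}=cB\eta_{\mathrm{off}}.
\]
This establishes Equation~\ref{eq:theory_offtarget_bound}.

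For the deviation bound, we decompose $g=g_{\mathrm{rel}}+g_{\mathrm{off}}$, where $g_{\mathrm{rel}}=-\frac1n\sum_{k\in\mathcal I_{\mathrm{rel}}}f(r_k)v_k$. Write $n_{\mathrm{rel}}=|\mathcal I_{\mathrm{rel}}|=(1-\eta_{\mathrm{off}})n$. The relevant-only mean is $\overline g_{\mathrm{rel}}=-\frac1{n_{\mathrm{rel}}}\sum_{k\in\mathcal I_{\mathrm{rel}}}f(r_k)v_k$, so $g_{\mathrm{rel}}=(1-\eta_{\mathrm{off}})\overline g_{\mathrm{rel}}$. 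Hence
\[
g-\overline g_{\mathrm{rel}}=g_{\mathrm{off}}-\eta_{\mathrm{off}}\,\overline g_{\mathrm{rel}}.
\]
The only nontrivial observation is this renormalization: the full gradient averages over $n$ tokens, while $\overline g_{\mathrm{rel}}$ averages over $n_{\mathrm{rel}}$. The resulting mismatch is exactly an $\eta_{\mathrm{off}}$-weighted copy of $\overline g_{\mathrm{rel}}$.

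It remains to bound the two terms. Applying Proposition~\ref{prop:bounded_gradient} to the relevant subset alone, with uniform bound $B$, gives $\|\overline g_{\mathrm{rel}}\|\leq cB$. The hypothesis $\mathcal I_{\mathrm{rel}}\neq\varnothing$ guarantees $n_{\mathrm{rel}}>0$, so this mean is well defined. Combining with the first part through the triangle inequality yields
\[
\|g-\overline g_{\mathrm{rel}}\|\leq cB\eta_{\mathrm{off}}+\eta_{\mathrm{off}}cB=2cB\eta_{\mathrm{off}},
\]
which is Equation~\ref{eq:theory_selection_deviation}. No step poses a real obstacle.
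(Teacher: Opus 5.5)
Your proposal is correct and follows essentially the same route as the paper: the first bound by the triangle inequality with $|f(r_k)|<c$, and the second by writing $g=(1-\eta_{\mathrm{off}})\overline g_{\mathrm{rel}}+g_{\mathrm{off}}$ and bounding $\|g_{\mathrm{off}}\|$ and $\eta_{\mathrm{off}}\|\overline g_{\mathrm{rel}}\|$ separately by $cB\eta_{\mathrm{off}}$. Your explicit remark that $\mathcal I_{\mathrm{rel}}\neq\varnothing$ makes $\overline g_{\mathrm{rel}}$ well defined is a small clarity improvement over the paper's version.
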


\begin{proof}
Each token contributes a vector of norm at most $cB$ before
normalization, proving Equation~\ref{eq:theory_offtarget_bound}.
Moreover,
\[
 g=(1-\eta_{\mathrm{off}})\overline g_{\mathrm{rel}}
   +g_{\mathrm{off}}.
\]
Since $\|\overline g_{\mathrm{rel}}\|\leq cB$, the triangle
inequality gives
$\|g-\overline g_{\mathrm{rel}}\|
\leq\eta_{\mathrm{off}}cB+cB\eta_{\mathrm{off}}$.
\end{proof}

Thus, reducing the fraction of irrelevant selected tokens tightens
a worst-case bound on their effect. This result quantifies the role
of selection but does not assume that the analyzer is always correct.

\subsection{Properties of Success-Relative Efficiency Rewards}
\label{app:pairwise_properties}

Consider a task group with $M=|\mathcal S_q|$ successful trajectories
and positive interaction lengths $T_i$. The pairwise score $b_i$
is defined in Equation~\ref{eq:pairwise_score}.

\begin{scopeproposition}[Pairwise reward properties]
\label{prop:pairwise_properties}
For $M\geq2$ and $\tau_{\mathrm{len}}>0$, the scores satisfy:
\begin{enumerate}[label=(\roman*),leftmargin=*]
\item \emph{Length ordering:} $0<b_i<1$, and
$T_i<T_j$ implies $b_i>b_j$; equal lengths give equal scores.
\item \emph{Scale invariance:} replacing all $T_i$ with $aT_i$
for any $a>0$ leaves every $b_i$ unchanged.
\item \emph{Mean-bonus conservation:}
\begin{equation}
\label{eq:theory_bonus_conservation}
\begin{aligned}
 \sum_{i\in\mathcal S_q}b_i&=\frac M2,\\
 \sum_{i\in\mathcal S_q}(R_i^{\mathrm{pair}}-R_i)
 &=\frac{\lambda_{\mathrm{len}}M}{2}.
\end{aligned}
\end{equation}
\end{enumerate}
The last identities also hold for $M=1$ and $M=0$ under the
conventions in Equation~\ref{eq:pairwise_score}.
\end{scopeproposition}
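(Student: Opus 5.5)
The plan is to work directly from the definition of $b_i$ in Equation~\ref{eq:pairwise_score}. Everything should follow from two elementary facts about the logistic function: $\sigma$ is strictly increasing with range $(0,1)$, and $\sigma(x)+\sigma(-x)=1$. Write $d_{ij}=(\log T_j-\log T_i)/\tau_{\mathrm{len}}$, so that $p_{ij}=\sigma(d_{ij})$ and $d_{ji}=-d_{ij}$.

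For (i), each $p_{ij}\in(0,1)$. Since $b_i$ is an average of finitely many such terms, $0<b_i<1$. For the ordering, suppose $T_i<T_j$ and compare
\[
b_i=\frac{1}{M-1}\Bigl(p_{ij}+\sum_{k\neq i,j}p_{ik}\Bigr)
\quad\text{with}\quad
b_j=\frac{1}{M-1}\Bigl(p_{ji}+\sum_{k\neq i,j}p_{jk}\Bigr).
\]
I would match terms:
\begin{itemize}
\item For each third trajectory $k$, $\log T_k-\log T_i>\log T_k-\log T_j$, so monotonicity gives $p_{ik}>p_{jk}$.
\item For the direct pair, $d_{ij}>0$, so $p_{ij}>1/2>p_{ji}$.
\end{itemize}
Summing gives $b_i>b_j$. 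If $T_i=T_j$, every matched pair of terms coincides and $p_{ij}=p_{ji}=1/2$, so $b_i=b_j$. The only real care needed is that the two index sets $\mathcal S_q\setminus\{i\}$ and $\mathcal S_q\setminus\{j\}$ differ. Pairing the $i$--$j$ cross terms separately, as above, resolves this.

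For (ii), $p_{ij}$ depends on the lengths only through $\log T_j-\log T_i$. This difference is unchanged under $T\mapsto aT$ for any $a>0$, so every $p_{ij}$, and hence every $b_i$, is unchanged.

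For (iii) with $M\ge2$, I would write $\sum_i b_i=\frac{1}{M-1}\sum_{i\neq j}p_{ij}$. Grouping the ordered pairs into the $\binom{M}{2}$ unordered pairs $\{i,j\}$, each contributes $p_{ij}+p_{ji}=\sigma(d_{ij})+\sigma(-d_{ij})=1$. The total is therefore $\binom{M}{2}/(M-1)=M/2$. By Equation~\ref{eq:length_reward}, $R_i^{\mathrm{pair}}-R_i=\lambda_{\mathrm{len}}b_i$ on successes and $0$ on failures. Summing over $\mathcal S_q$ then yields $\lambda_{\mathrm{len}}M/2$.

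The degenerate cases follow from the conventions in Equation~\ref{eq:pairwise_score}. For $M=1$, the lone success has $b_i=1/2=M/2$. For $M=0$, both sums are empty and equal $0=M/2$.
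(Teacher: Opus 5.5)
Your proposal is correct and takes essentially the same route as the paper. It matches terms via strict monotonicity of $\sigma$, handling the $i$--$j$ cross terms separately, which is exactly the paper's identity for $(M-1)(b_i-b_j)$; it uses invariance of $\log T_j-\log T_i$ for (ii), pairs $p_{ij}+p_{ji}=1$ over the $\binom{M}{2}$ unordered pairs for (iii), and reads off the $M=0,1$ cases from the conventions.
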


\begin{proof}
For distinct successes $i,j$, Equation~\ref{eq:pairwise_length}
gives $p_{ij}>p_{ji}$ whenever $T_i<T_j$. The sigmoid is strictly
monotone, so $p_{ik}>p_{jk}$ for each other success $k$. Hence
\begin{equation}
\label{eq:theory_pairwise_order}
\begin{aligned}
 (M-1)(b_i-b_j)
 ={}&p_{ij}-p_{ji}\\
 &+\sum_{k\in\mathcal S_q\setminus\{i,j\}}
       (p_{ik}-p_{jk})>0.
\end{aligned}
\end{equation}
Equal lengths give equality, and $0<p_{ij}<1$ implies
$0<b_i<1$. This proves (i). For (ii),
$\log(aT_j)-\log(aT_i)=\log T_j-\log T_i$.
Finally, $p_{ij}+p_{ji}=1$, so
\[
 \sum_{i\in\mathcal S_q}b_i
 =\frac{1}{M-1}\sum_{i<j}(p_{ij}+p_{ji})
 =\frac M2.
\]
Multiplying by $\lambda_{\mathrm{len}}$ proves (iii).
The cases $M=0,1$ follow from the definitions.
\end{proof}

\begin{scopecorollary}[Success--failure reward ordering]
\label{cor:success_order}
If $R_i\geq R_j$ for every successful trajectory $i$ and failed
trajectory $j$, then for $\lambda_{\mathrm{len}}>0$,
$R_i^{\mathrm{pair}}>R_j^{\mathrm{pair}}$.
Among successes with equal original rewards, shorter trajectories
receive larger shaped rewards.
\end{scopecorollary}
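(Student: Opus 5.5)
The plan is to compare the shaped rewards directly from Equation~\ref{eq:length_reward}, using part~(i) of Proposition~\ref{prop:pairwise_properties} together with the conventions in Equation~\ref{eq:pairwise_score}. The key fact is that every success receives a \emph{strictly positive} bonus, while every failure receives none.

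\textbf{Success versus failure.} Fix a successful trajectory $i$ and a failed trajectory $j$. The positivity of $b_i$ splits into two cases.
\begin{itemize}
\item If $M=|\mathcal S_q|\geq2$, Proposition~\ref{prop:pairwise_properties}(i) gives $0<b_i<1$.
\item If $M=1$, the convention gives $b_i=1/2>0$.
\end{itemize}
(The case $M=0$ cannot occur, since $i$ is a success.) Because $z_j=0$, Equation~\ref{eq:length_reward} gives $R_j^{\mathrm{pair}}=R_j$. Therefore
\[
R_i^{\mathrm{pair}}=R_i+\lambda_{\mathrm{len}}b_i>R_i\geq R_j=R_j^{\mathrm{pair}},
\]
where the strict inequality uses $\lambda_{\mathrm{len}}>0$ and $b_i>0$.

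One caveat concerns scope. The shaping is defined within a task group, so the comparison is meant for $i$ and $j$ in the same group $\mathcal G_q^{(u)}$. The argument still works across groups, because positivity of $b_i$ holds in every group, but I would state it within a group to match the GRPO normalization.

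\textbf{Ordering among successes.} Take two successes $i,j\in\mathcal S_q$ with $R_i=R_j$ and $T_i<T_j$. Then $M\geq2$, and Proposition~\ref{prop:pairwise_properties}(i) gives $b_i>b_j$. Hence
\[
R_i^{\mathrm{pair}}-R_j^{\mathrm{pair}}=\lambda_{\mathrm{len}}(b_i-b_j)>0.
\]

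\textbf{Where the difficulty lies.} There is no real obstacle; the only care needed is bookkeeping of the degenerate cases. The lone-success case $M=1$ must be handled by the convention rather than by part~(i), which assumes $M\geq2$. The second claim also needs both trajectories in the same group so that $b_i$ and $b_j$ are computed against a common $\mathcal S_q$.
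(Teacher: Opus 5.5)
Your proposal is correct and follows essentially the same argument as the paper: positivity of $b_i$ for every success, no bonus for failures, and Proposition~\ref{prop:pairwise_properties}(i) for the ordering among equally rewarded successes. Your separate handling of the lone-success case $M=1$ via the convention $b_i=1/2$, and your restriction of the second claim to successes in the same group, make explicit two points the paper's proof leaves implicit.
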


\begin{proof}
For a success $i$, $b_i>0$, whereas a failure receives no
length bonus. Therefore,
\[
 R_i^{\mathrm{pair}}-R_j^{\mathrm{pair}}
 =(R_i-R_j)+\lambda_{\mathrm{len}}b_i>0.
\]
The ordering among equally rewarded successes follows from
Proposition~\ref{prop:pairwise_properties}(i).
\end{proof}

The pairwise reward therefore favors shorter successes
without a fixed length target. Its mean bonus is always
$\lambda_{\mathrm{len}}/2$ across successes, while the relative
allocation changes with the current rollout group.

\section{Experimental Details}
\label{app:experimental_details}

\subsection{Training Configuration}
\label{app:training_configuration}

SCOPE follows the supervised-initialization and online-training
workflow of SEED~\citep{wu2026seed}. Table~\ref{tab:scope_configuration}
summarizes the key coefficients used by the full method. The policy
snapshot supplies both the actor and the online analyzer; its
skill-conditioned teacher scores are cached before student
optimization. Online analysis has no separate supervised loss:
the policy is updated through the joint objective in
Equation~\ref{eq:joint_objective}. For the smaller backbones, training uses two NVIDIA
H200 GPUs on WebShop and four on ALFWorld.
The Qwen3-32B experiments use 16 NVIDIA H200 GPUs.

\begin{table}[htbp]
\centering
\caption{Key settings of full SCOPE. Parameter sweeps vary one of
$\alpha$, $c$, or $\tau_{\mathrm{len}}$ at a time.}
\label{tab:scope_configuration}
\small
\setlength{\tabcolsep}{4pt}
\renewcommand{\arraystretch}{1.12}
\begin{tabular}{@{}lc@{}}
\toprule
\textbf{Parameter} & \textbf{Value} \\
\midrule
Maximum skill entries $K_{\max}$ & 5 \\
Signed-gap slope $\alpha$ & 2.5 \\
Saturation bound $c$ & 1 \\
Distillation weight $\lambda$ & 0.01 \\
Length-preference temperature $\tau_{\mathrm{len}}$ & 0.1 \\
Length-bonus weight $\lambda_{\mathrm{len}}$ & 5 \\
PPO clipping radius $\epsilon$ & 0.2 \\
Reference KL weight $\beta_{\mathrm{ref}}$ & 0.01 \\
Entropy weight $\lambda_H$ & 0.001 \\
\bottomrule
\end{tabular}
\end{table}

The main comparison uses the step-150 checkpoint. ALFWorld ablations
and parameter sweeps use step 180. Training-step savings in
Figure~\ref{fig:intro_overview}(c) compare the first observed
checkpoints reaching GRPO's final mean success rate.

\subsection{Validation Curves Across Backbones}
\label{app:validation_curves}

To complement the multi-seed training dynamics in
Figure~\ref{fig:intro_overview}(c), we report single-run
validation curves for SCOPE with Qwen2.5-3B-Instruct
and Qwen2.5-7B-Instruct on ALFWorld and WebShop.
Both figures mark training step 150, the checkpoint
used for the main comparison in
Table~\ref{tab:main_results}.

\begin{figure}[tbp]
    \centering
    \includegraphics[width=\linewidth]
    {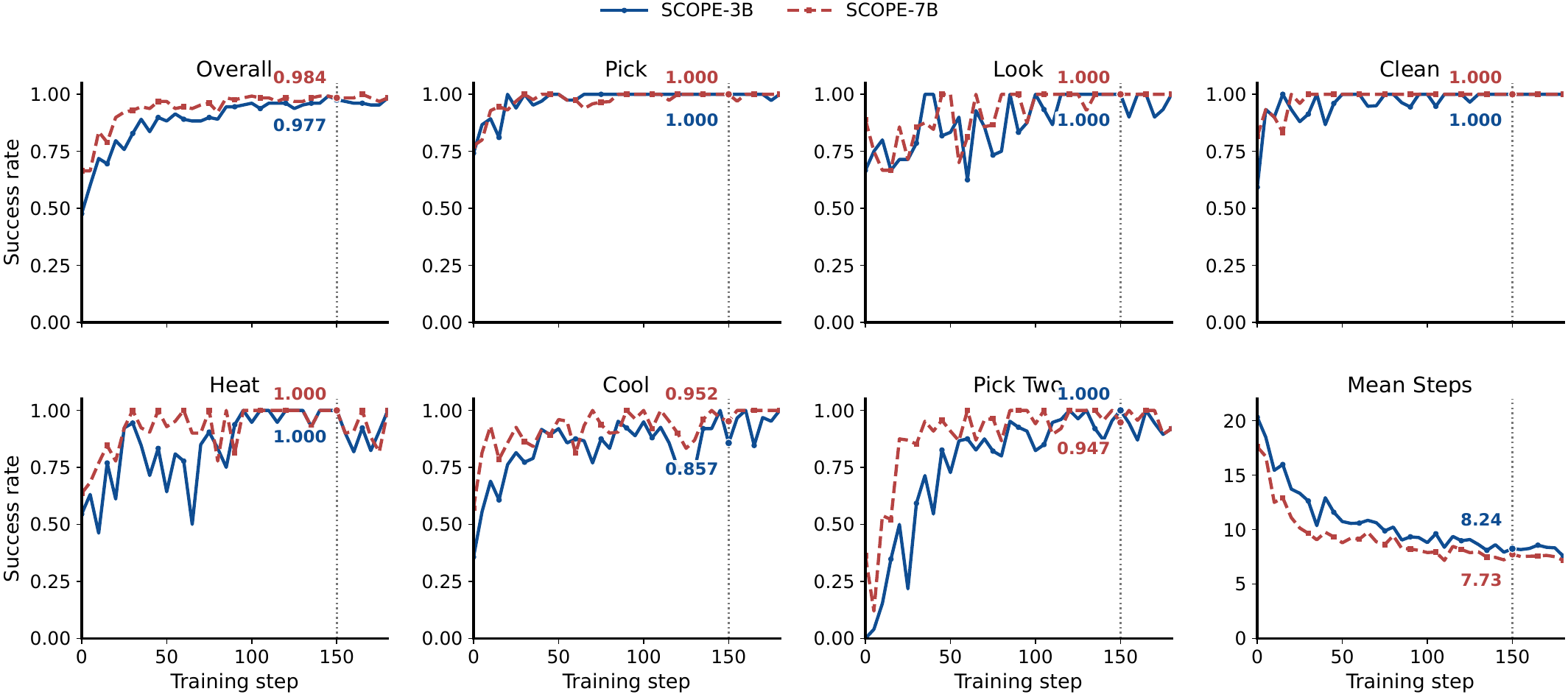}
    \caption{\textbf{Single-run validation curves on ALFWorld.}
    Overall and task-family success rates, together with
    mean environment interaction steps, for SCOPE-3B
    and SCOPE-7B. The dotted vertical line indicates
    step 150, whose overall success rates match
    Table~\ref{tab:main_results}.}
    \label{fig:app_alfworld_validation_curves}
\end{figure}

\begin{figure}[tbp]
    \centering
    \includegraphics[width=\linewidth]
    {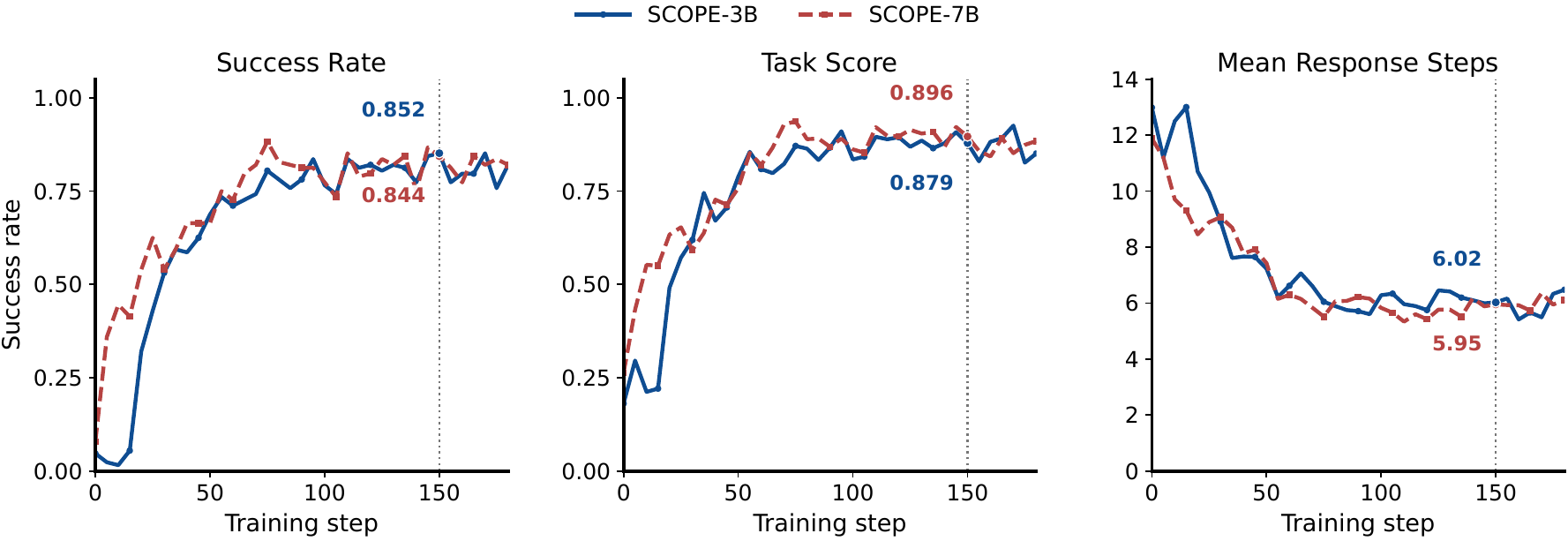}
    \caption{\textbf{Single-run validation curves on WebShop.}
    Success rate, normalized task score, and mean
    environment interaction steps for SCOPE-3B
    and SCOPE-7B. The dotted vertical line indicates
    step 150; the success rates and normalized scores
    match Table~\ref{tab:main_results}.}
    \label{fig:app_webshop_validation_curves}
\end{figure}

As shown in Figures~\ref{fig:app_alfworld_validation_curves}
and~\ref{fig:app_webshop_validation_curves}, both backbones
improve substantially during online training while reducing
interaction steps. At step 150, SCOPE-3B and SCOPE-7B
reach 97.7\% and 98.4\% overall ALFWorld success,
respectively. On WebShop, they achieve 85.2\% and
84.4\% success, with normalized task scores of
87.9\% and 89.6\%, respectively.
The corresponding mean interaction counts are
8.24 and 7.73 on ALFWorld, and 6.02 and 5.95
on WebShop. These results provide single-run
training context for the main-table checkpoint values.

\subsection{Supervised Data and Hindsight Analysis}
\label{app:sft_preparation}

GLM-5.2 generates the offline annotations used for SFT. Each input
contains the task description, episode outcome, and the recorded
sequence of observations and responses, together with action-validity
feedback when available. Successful trajectories are analyzed for
avoidable errors. Failed trajectories may additionally receive up
to three successful action sequences from the same task as reference
context. These offline references can come from separate runs of that
task; online references are drawn from the current on-policy group.

The annotation configuration uses temperature 0, an output limit of
8,192 tokens, and disabled thinking. Observations are retained without
character-level truncation. An annotation contains a trajectory
summary and at most five skill entries, each with target-step indices,
an error explanation, and a short reusable corrective instruction.
Entries that require the same correction can cover multiple steps.
The prompt asks the annotator to judge decisions using the information
available at each step and to preserve reasonable exploration. An
empty skill list is returned when no clear error is supported.

The data pipeline checks the JSON structure, nonempty required fields,
and valid, unique target indices before export. Each SFT example is a
system--user--assistant conversation: the input is the analysis prompt,
and the assistant target contains only the structured summary and skill
entries. Successful reference action lists remain in the input rather
than in the prediction target. The same analysis instructions and
output schema are used by the online analyzer, which is the current
policy snapshot. If a failed group contains no success, the reference
block is omitted and the analyzer judges the recorded trajectory
directly. Appendix~\ref{app:analysis_prompts} provides the templates.

\subsection{Error-Localization Evaluation Metrics}
\label{app:error_localization_metrics}

We evaluate error-step localization against reference annotations.
For each trajectory $i$, let $E_i$ and $\widehat E_i$ denote
the reference and predicted error-step sets, respectively.
Predictions are extracted from the \texttt{target\_steps}
fields of generated skill entries.

\paragraph{Step-level metrics}
Micro-precision and micro-recall are defined as
$P=\frac{\mathrm{TP}}{\mathrm{TP}+\mathrm{FP}}$ and
$R=\frac{\mathrm{TP}}{\mathrm{TP}+\mathrm{FN}}$,
with counts aggregated across all trajectories.
Micro-F1 is $\frac{2PR}{P+R}$, while Macro-F1 averages
the F1 scores computed separately for each trajectory.

\paragraph{Trajectory-level metrics}
Exact Match measures the fraction of trajectories with
$\widehat E_i=E_i$. Parse Rate measures the fraction of
outputs satisfying the required JSON schema.
All metrics are reported as percentages.

\subsection{Motivation Diagnostics}
\label{app:diagnostic_details}

\paragraph{Outcome--step mismatch}
The WebShop study in Figure~\ref{fig:intro_overview}(a) uses
GLM-5.3 to identify erroneous steps in 1,400 trajectories: 536
successful and 864 failed episodes. Local decision quality is judged
separately from episode outcome: an eventual success can contain
avoidable mistakes, and an eventual failure can contain correct
decisions. The figure reports error and correct-step proportions
separately for the two outcome groups. GLM-5.2 supplies the offline
SFT annotations described above; GLM-5.3 supplies these diagnostic
error-step labels.

\paragraph{Off-target probability shifts}
The skill-influence analysis scores 33,602 steps and 4,035,655
response tokens from 177 task groups. The recorded relevance labels
divide steps into 14,081 related and 19,521 unrelated steps according
to the applicability of the trajectory-level skill. On the sampled
response tokens, we compute the signed gap
$\Delta=\log\pi_T(y\mid h,s)-\log\pi_S(y\mid h)$.
Within each outcome and relevance group, positive and negative bars
are pooled-token means conditional on $\Delta>0$ and $\Delta<0$,
respectively. Zero gaps are excluded from these conditional means.
The 57.9\% off-target share divides the number of nonzero-gap tokens
at unrelated steps by the number of nonzero-gap tokens at all steps.
The negative and positive shares of 53.9\% and 46.1\% use nonzero-gap
tokens at unrelated steps as their denominator.

Error bars show 95\% task-group bootstrap intervals from 2,000
resamples with random seed 0. Task descriptions and initial
observations define the grouping proxy; each resample draws complete
groups rather than individual tokens. This preserves within-group
dependence when estimating uncertainty in the conditional means.

\section{SFT and Analyzer Prompt Templates}
\label{app:analysis_prompts}

The offline annotator and online analyzer share the system message,
analysis instructions, and JSON schema below. A user message consists
of the appropriate outcome-specific opening, the shared instructions,
and the episode context. Placeholders in angle brackets are replaced
with the recorded task and trajectory. The reference block is included
only for failures with available successful references. In the
serialized trajectory, each step contains its index, observation,
response, and action-validity flag when recorded.

\begin{scopepromptbox}{System message}{scopepromptblue}
\begin{Verbatim}[fontsize=\footnotesize,breaklines=true,breakanywhere=true,
  breaksymbolleft={},breaksymbolright={},
  breakanywheresymbolpre={},breakanywheresymbolpost={}]
You analyze agent episodes and extract concise, policy-facing skills for specific error steps. Return ONLY valid JSON.
\end{Verbatim}
\end{scopepromptbox}
\begin{scopepromptbox}{Shared analysis instructions and output schema}{scopepromptteal}
\begin{Verbatim}[fontsize=\footnotesize,breaklines=true,breakanywhere=true,
  breaksymbolleft={},breaksymbolright={},
  breakanywheresymbolpre={},breakanywheresymbolpost={}]
First write a concise trajectory_summary. Then return up to 5 step_skills. For each:
1. Identify clear error steps in target_steps; steps requiring the same correction share one skill.
2. Write one short policy-facing skill that extracts the correction into a reusable workflow: the core decision rule and action ordering that would avoid the mistake.
3. Write a one-sentence error_reason that cites the erroneous steps and explains why the actions at those steps were mistakes.

Important constraints:
- Judge errors using information available at each target step; do not label reasonable exploration.
- Write skill as one short, natural instruction, not a retrospective explanation or a list of labeled sections.
- Generalize beyond this task without names, IDs, exact values, or step numbers; keep the advice concrete and applicable at every target step.
- Use existing step indices, sorted and unique; sort entries by first target step.
- If no clear error is supported, still write trajectory_summary and return {"step_skills":[]}.
- Return only the top-level fields trajectory_summary and step_skills.

Return format (replace string placeholders and 0 with actual text and existing integer step indices):
{"trajectory_summary":"string","step_skills":[{"target_steps":[0],"error_reason":"string","skill":"string"}]}
\end{Verbatim}
\end{scopepromptbox}

\subsection{Outcome-Specific User Template}
\label{app:analysis_prompts:user_template}

The same template is used for successful and failed trajectories.
Outcome-specific instructions are selected accordingly, while
successful references are included only for failures when available.

\begin{scopepromptbox}{User template: successful / failed trajectory}{scopepromptpurple}
\begin{Verbatim}[fontsize=\footnotesize,breaklines=true,breakanywhere=true,
  breaksymbolleft={},breaksymbolright={},
  breakanywheresymbolpre={},breakanywheresymbolpost={}]
Analyze the following <SUCCESSFUL/FAILED> trajectory.
<IF SUCCESSFUL: It may still contain avoidable mistakes.>
<IF FAILED: Successful references are hints; differences alone do not prove mistakes.>
<SHARED ANALYSIS INSTRUCTIONS AND OUTPUT SCHEMA>

Episode context:
- Task description: <TASK>
- episode_success: <success/failure>
- Interaction trajectory:
Step <STEP_INDEX>
Observation: <OBSERVATION>
Response: <RESPONSE>
Action valid: <true_or_false>

<REMAINING RECORDED STEPS>

<IF FAILED AND REFERENCES AVAILABLE:
Successful reference trajectories (actions only):
Successful reference 1:
Action 0: <ACTION>
Action 1: <ACTION>
<REMAINING REFERENCE ACTIONS>
<ADDITIONAL SUCCESSFUL REFERENCES, IF AVAILABLE>
>
\end{Verbatim}
\end{scopepromptbox}

\end{document}